\newcommand{\cmark}{\ding{51}}%
\newcommand{\xmark}{\ding{55}}%
\newcommand{\x}{\mathbf{x}}
\newcommand{\z}{\mathbf{z}}
\renewcommand{\u}{\mathbf{u}}
\newcommand{\y}{\mathbf{y}}
\renewcommand{\t}{\mathbf{t}}
\newcommand{\f}{\mathbf{f}}
\newcommand{\btau}{\boldsymbol{\tau}}
\newcommand{\A}{\mathbf{A}}
\newcommand{\K}{\mathbf{K}}
\newcommand{\B}{\mathbf{B}}
\newcommand{\R}{\mathbb{R}}
\newcommand{\0}{\mathbf{0}}
\newcommand{\N}{\mathcal{N}}
\newcommand{\GP}{\mathcal{GP}}
\newcommand{\bmu}{\boldsymbol{\mu}}
\newcommand{\bSigma}{{\boldsymbol{\Sigma}}}
\newcommand{\bomega}{\boldsymbol{\omega}}
\newcommand{\bxi}{\boldsymbol{\xi}}
\newcommand{\tx}{\widetilde{\mathbf{x}}}
\newcommand{\bgamma}{\boldsymbol{\gamma}}
\DeclareMathOperator{\cov}{\textbf{cov}}
\newtheorem{theorem}{Theorem}
\newcommand{\norm}[1]{\left\lVert#1\right\rVert}
\renewcommand\d[1]{\ensuremath{%
  \;\mathrm{d}#1\@ifnextchar\d{\!}{}}}
\newenvironment{proofskch}{\noindent\textit{Proof sketch.}}{\hfill$\square$}
\theoremstyle{definition}
\newtheorem{definition}{Definition}
\title{Harmonizable mixture kernels with variational Fourier features}
\author{ Zheyang Shen \qquad Markus Heinonen \qquad Samuel Kaski  \\ Helsinki Institute for Information Technology HIIT \\ Aalto University}
\begin{document}
\twocolumn[\maketitle]

\begin{abstract}
The expressive power of Gaussian processes depends heavily on the choice of kernel. In this work we propose the novel harmonizable mixture kernel (HMK), a family of expressive, interpretable, non-stationary kernels derived from mixture models on the generalized spectral representation. As a theoretically sound treatment of non-stationary kernels, HMK supports harmonizable covariances, a wide subset of kernels including all stationary and many non-stationary covariances. We also propose variational Fourier features, an inter-domain sparse GP inference framework that offers a representative set of `inducing frequencies'. We show that harmonizable mixture kernels interpolate between local patterns, and that variational Fourier features offers a robust kernel learning framework for the new kernel family. 
\end{abstract}

\section{INTRODUCTION}

Kernel methods are one of the cornerstones of machine learning and pattern recognition. Kernels, as a measure of similarity between two objects, depart from common linear hypotheses by allowing for complex nonlinear patterns \citep{vapnik2013nature}. In a Bayesian framework, kernels are interpreted probabilistically as covariance functions of random processes, such as for the Gaussian processes (GP) in Bayesian nonparametrics.
As rich distributions over functions, GPs serve as an intuitive nonparametric inference paradigm, with well-defined posterior distributions. \par
The kernel of a GP encodes the prior knowledge of the underlying function. 
The \emph{squared exponential} (SE) kernel is a common choice which, however, can only model global monotonic covariance patterns, while generalisations have explored local monotonicities \citep{gibbs1998bayesian, paciorek2004nonstationary}.
In contrast, expressive kernels can learn hidden representations of the data \citep{wilson2013gaussian}.\par
The two main approaches to construct expressive kernels are composition of simple kernel functions \citep{archambeau2011multiple, durrande2016detecting, gonen2011multiple, rasmussen2006, sun2018differentiable}, and modelling of the spectral representation of the kernel \citep{wilson2013gaussian, samo2015generalized, remes2017non}. In the compositional approach kernels are composed of simpler kernels, whose choice often remains ad-hoc.
\par
The spectral representation approach proposed by \citet{quia2010sparse}, and extended by \citet{wilson2013gaussian}, constructs \emph{stationary} kernels as the Fourier transform of a Gaussian mixture, with theoretical support from the Bochner's theorem. Stationary kernels are unsuitable for large-scale datasets that are typically rife with locally-varying patterns \citep{samo2016string}. \citet{remes2017non} proposed a practical \emph{non-stationary} spectral kernel generalisation based on Gaussian process frequency functions, but with explicitly unclear theoretical foundations. An earlier technical report studied a non-stationary spectral kernel family derived via the generalised Fourier transform \citep{samo2015generalized}. \citet{samo2017advances} expanded the analysis into non-stationary continuous bounded kernels. \par 
The cubic time complexity of GP models significantly hinders their scalability. Sparse Gaussian process models \citep{herbrich2003fast, snelson2006sparse, titsias2009variational,hensman2015scalable} scale GP models with variational inference on pseudo-input points as a concise representation of the input data. Inter-domain Gaussian processes generalize sparse GP models by linearly transforming the original  GP and computing cross-covariances, thus putting the inducing points on the transformed domain \citep{lazaro2009inter}.
\begin{table*}[t]
    \centering
    \resizebox{\textwidth}{!}{
    \begin{tabular}{lcccr}
    Kernel & Harmonizable & Non-stationary & Spectral inference & Reference \\
    \hline
    SE: squared exponential & \cmark & \xmark & \cmark & \citet{rasmussen2006} \\
    SS: sparse spectral  & \cmark & \xmark &\cmark & \citet{quia2010sparse} \\
    SM: spectral mixture & \cmark & \xmark & \cmark & \citet{wilson2013gaussian} \\
    GSK: generalised spectral kernel  & \cmark & \cmark & \xmark &\citet{samo2017advances}\\
    GSM: generalised spectral mixture &\bf{?} & \cmark & \xmark &\citet{remes2017non} \\
    HMK: harmonizable mixture kernel & \cmark & \cmark & \cmark & current work
    \end{tabular}
    }
    \caption{Overview of proposed spectral kernels. The SE, SS and SM kernels are stationary with scalable spectral inference paradigms \citep{lazaro2009inter, quia2010sparse, gal2015improving}. The GSM kernel is theoretically poorly defined with unknown harmonizable properties. HMK is well-defined with variational Fourier features as spectral inference.}
    \label{tab:spkernels}
\end{table*}

In this paper we propose a theoretically sound treatment of non-stationary kernels, with main contributions:
\begin{itemize}
    \item We present a detailed analysis of \textit{harmonizability}, a concept mainly existent in statistics literature. Harmonizable kernels are non-stationary kernels interpretable with their \emph{generalized} spectral representations, similar to stationary ones.
    \item We propose practical \emph{harmonizable mixture kernels} (HMK), a class of kernels dense in the set of harmonizable covariances with a mixture generalized spectral distribution. 
    \item We propose \emph{variational Fourier features}, an inter-domain GP inference framework for GPs equipped with HMK. Functions drawn from such GP priors have a well-defined Fourier transform, a desirable property not found in stationary GPs.
\end{itemize}



\section{HARMONIZABLE KERNELS}

In this section we introduce \emph{harmonizability}, a generalization of stationarity largely unknown to the field of machine learning. We first define harmonizable kernel, and then analyze two existing special cases of harmonizable kernels, stationary and locally stationary kernels. We present a theorem demonstrating the expressiveness of previous stationary spectral kernels. Finally, we introduce Wigner transform as a tool to interpret and analyze these kernels.\par
Throughout the discussion in the paper, we consider complex-valued kernels with vectorial input $k(\x, \x'): \R^D\times \R^D\mapsto\mathbb{C}$, and we denote vectors from the input (data) domain with symbols $\x, \x', \btau, \bf{t}$, while we denote frequencies with symbols $\bxi, \bomega$.

\begin{figure*}[t]
    \centering
    \includegraphics[width=\textwidth]{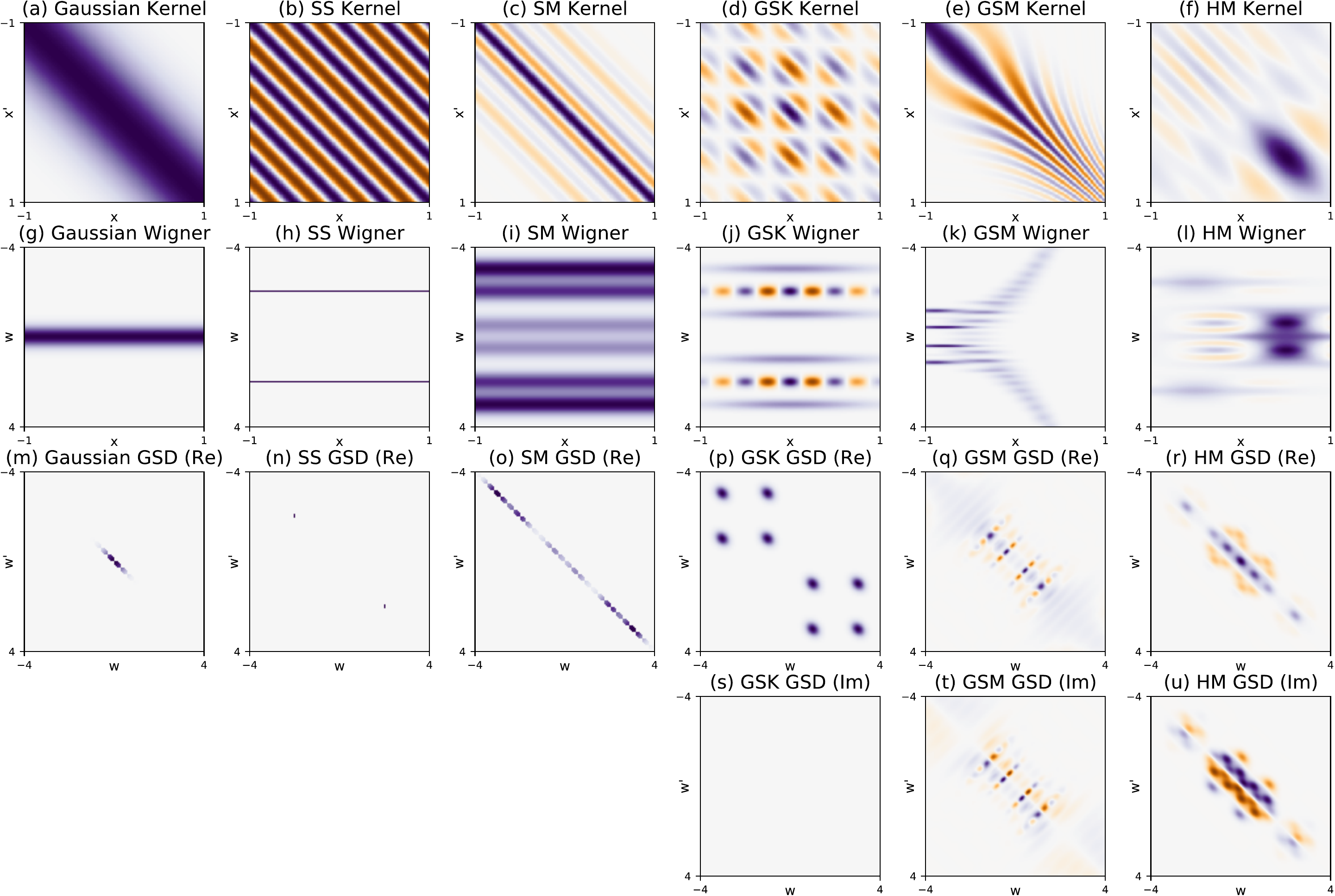}
    \caption{Comparison of Gaussian, SS, SM, GSK, GSM and HM kernels (columns) with respect to the kernel, Wigner distribution, and the generalized spectral density including real and imaginary part (rows).}
    \label{fig:fig1}
\end{figure*}

\subsection{Harmonizable kernel definition}

A harmonizable kernel \citep{kakihara1985, yaglom1987correlation, loeve1994probability} is a kernel with a \emph{generalized spectral distribution} defined by a generalized Fourier transform:
\begin{definition}
A complex-valued bounded continuous kernel $k: \mathbb{R}^D \times\mathbb{R}^D\mapsto \mathbb{C}$ is \emph{harmonizable} when it can be represented as
\begin{align}
    k(\x,\x') &= \int_{\mathbb{R}^D\times\mathbb{R}^D} e^{2i\pi(\bomega^\top \x-\bxi^\top \x')}\mu_{\Psi_k}(\text{d}\bomega, \text{d}\bxi),
\end{align}
where $\mu_{\Psi_k}$ is the Lebesgue-Stieltjes measure associated to some positive definite function $\Psi_k(\bomega, \bxi)$ with bounded variations.
\end{definition}

Harmonizability is a property shared by kernels and random processes with such kernels. The positive definite measure induced by function $\Psi_k$ is defined as the generalized spectral distribution of the kernel, and when $\mu_{\Psi_k}$ is twice differentiable, the derivative $S_k(\bomega, \bxi) = \dfrac{\partial^2\Psi_k}{\partial\bomega\partial\bxi}$ is defined as \emph{generalized spectral density} (GSD).\par
Harmonizable kernel is a very general class in the sense that it contains a large portion of bounded, continuous kernels (See Table \ref{tab:spkernels}) with only a handful of (somewhat pathological) exceptions \citep{yaglom1987correlation}.

\subsection{Comparison with Bochner's theorem}
Stationary kernels are kernels whose value only depends on the distance $\btau=\x-\x'$, and therefore is invariant to translation of the input. Bochner's theorem \citep{bochner1959lectures, stein2012interpolation} expresses similar relation between finite measures and kernels:
\begin{theorem}
(Bochner) A complex-valued function $k: \mathbb{R}^D\times\mathbb{R}^D\mapsto\mathbb{C}$ is the covariance function of a weakly stationary mean square continuous complex-valued random process on $\mathbb{R}^D$ if and only if it can be represented as \begin{align}
    k(\btau) &= \int_{\mathbb{R}^D} e^{2i\pi\bomega^\top\btau} \psi_k(\text{d}\bomega).
\end{align}
where $\psi_k$ is a positive finite measure.
\end{theorem}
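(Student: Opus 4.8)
The plan is to prove the two implications separately, since the \emph{sufficiency} (``if'') direction is elementary and the \emph{necessity} (``only if'') direction carries all the analytic weight. For sufficiency I would assume the integral representation and verify directly that $k$ is a continuous, bounded, positive definite function, hence the covariance of a weakly stationary mean-square continuous Gaussian process obtained by the standard Kolmogorov construction. Substituting the representation into the quadratic form attached to any finite collection $\x_1,\dots,\x_n\in\R^D$ and complex coefficients $c_1,\dots,c_n$ gives
\[
\sum_{j,l} c_j\,\overline{c_l}\;k(\x_j-\x_l)=\int_{\R^D}\Bigl|\textstyle\sum_j c_j\,e^{2i\pi\bomega^\top\x_j}\Bigr|^2\,\psi_k(\d\bomega)\ \ge\ 0,
\]
so $k$ is positive definite; boundedness ($|k(\btau)|\le\psi_k(\R^D)=k(\0)$) and continuity follow from finiteness of $\psi_k$ by dominated convergence.

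The substantial content is the necessity direction: from a continuous positive definite $k$ (continuity coming from mean-square continuity of the process) I must \emph{construct} the finite positive measure $\psi_k$. The strategy is to build a family of candidate spectral densities by a damped inverse transform and then pass to a limit. For a window length $T>0$ I would set
\[
p_T(\bomega)=\int_{\R^D}\Bigl(\textstyle\prod_{d=1}^D\bigl(1-\tfrac{|\tau_d|}{T}\bigr)_{+}\Bigr)\,k(\btau)\,e^{-2i\pi\bomega^\top\btau}\,\d\btau .
\]
The key observation is that $p_T(\bomega)\ge 0$: recognizing each triangular factor as the self-convolution of an indicator (the multivariate Fej\'er kernel) rewrites $p_T(\bomega)$ as $T^{-D}\!\int_{[0,T]^D}\!\int_{[0,T]^D} k(\u-\bv)\,e^{-2i\pi\bomega^\top(\u-\bv)}\,\d\u\,\d\bv$, which is a limit of the nonnegative quadratic forms above and hence nonnegative.

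Each $p_T$ then defines a positive measure $\psi_T(\d\bomega)=p_T(\bomega)\,\d\bomega$ whose inverse transform returns the damped covariance, so in particular the total masses satisfy $\psi_T(\R^D)=k(\0)$ uniformly in $T$. The remaining work is \emph{tightness} — ruling out mass escaping to infinity — which follows from continuity of $k$ at the origin via the standard estimate bounding $\psi_T$ of the complement of a large ball by an average of $k(\0)-\mathrm{Re}\,k(\btau)$ over small $\btau$. Tightness lets me invoke Helly's selection theorem (equivalently Prokhorov's theorem) to extract a subsequence $\psi_{T_n}$ converging weakly to a finite positive measure $\psi_k$. Since $\bomega\mapsto e^{2i\pi\bomega^\top\btau}$ is bounded and continuous, weak convergence gives $\int e^{2i\pi\bomega^\top\btau}\,\psi_{T_n}(\d\bomega)\to\int e^{2i\pi\bomega^\top\btau}\,\psi_k(\d\bomega)$, while the left-hand side equals $\prod_d(1-|\tau_d|/T_n)_{+}\,k(\btau)\to k(\btau)$; matching the two limits yields the claimed representation.

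I expect the main obstacle to be this necessity direction, and within it the tightness step: nonnegativity of $p_T$ is a clean consequence of positive definiteness, but excluding mass at infinity genuinely uses the regularity of $k$ at $\0$, and it is precisely this step that breaks down for merely bounded, discontinuous positive definite functions.
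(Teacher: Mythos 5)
The paper does not actually prove this statement: it is Bochner's classical theorem, quoted as background and attributed to \citet{bochner1959lectures} and \citet{stein2012interpolation}, so there is no in-paper argument to compare yours against. Judged on its own terms, your proposal is a correct outline of the standard textbook proof. The sufficiency direction (positive definiteness via the modulus-squared identity, boundedness $|k(\btau)|\le\psi_k(\R^D)=k(\0)$, continuity by dominated convergence, then a Kolmogorov-type construction) is right, and the necessity direction follows the classical Fej\'er-damping route: nonnegativity of $p_T$ from positive definiteness of $k$ via Riemann-sum approximation of the double integral, uniform total mass, tightness from continuity of $k$ at the origin, extraction of a weak limit by Helly/Prokhorov, and matching of limits against the bounded continuous test functions $\bomega\mapsto e^{2i\pi\bomega^\top\btau}$.

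Two steps deserve tightening. First, before you may treat $\psi_T(\text{d}\bomega)=p_T(\bomega)\,\text{d}\bomega$ as a finite measure with mass $k(\0)$, you must show $p_T\in L^1(\R^D)$; nonnegativity of $p_T$ alone does not give integrability, and without it neither the total-mass identity nor the Fourier inversion you implicitly invoke is justified. The standard fix is to integrate $p_T$ against a second approximate identity, e.g.\ $\prod_{d=1}^D(1-|\omega_d|/\Omega)_+$, use Fubini to rewrite this as an integral of the damped covariance against a Fej\'er kernel, send $\Omega\to\infty$, and combine monotone convergence on the frequency side with continuity of $k$ at $\0$ on the input side; this yields both $p_T\in L^1$ and $\int p_T(\bomega)\,\text{d}\bomega=k(\0)$ in one stroke. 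Second, in the sufficiency direction the Kolmogorov construction for a complex-valued Hermitian positive definite covariance requires the circularly symmetric complex Gaussian (equivalently, a real $2n$-dimensional augmentation of each finite-dimensional law); this is routine but worth a sentence, since the theorem as stated concerns complex-valued processes. Neither point is a conceptual gap: your sketch is essentially the proof found in the cited sources.
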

Bochner's theorem draws duality between the space of finite measures to the space of stationary kernels. The \emph{spectral distribution} $\psi_k$ of a stationary kernel is the finite measure induced by a Fourier transform. And when $\psi_k$ is absolutely continuous with respect to the Lebesgue measure, its density is called \emph{spectral density} (SD),  $S_k(\bomega)=\dfrac{\d{\psi_k(\bomega)}}{\d{\bomega}}$.\par
Harmonizable kernels include stationary kernels as a special case. When the mass of the measure $\mu_\Psi$ is concentrated on the diagonal $\bomega=\bxi$, the generalized inverse Fourier transform devolves into an inverse Fourier transform with respect to $\btau=\x-\x'$, and therefore recovers the exact form in Bochner's theorem.

A key distinction between the two spectral distributions is that the spectral distribution is a nonnegative finite measure, but the generalized spectral distribution is a complex-valued measure with subsets assigned to complex numbers. Even with a real-valued harmonizable kernel, $\Psi_k$ can be complex-valued.

\subsection{Stationary spectral kernels}
The perspective of viewing the spectral distribution as a normalized probability measure makes it possible to construct expressive stationary kernels by modeling their spectral distributions. Notable examples include the sparse spectrum (SS) kernel \citep{quia2010sparse}, and spectral mixture (SM) kernel \citep{wilson2013gaussian},
\begin{align}
    k_{SS}(\btau) &= \sum_{q=1}^Q \alpha_q\cos(2\pi\bomega_q^\top\btau),\\
    k_{SM}(\btau) &= \sum_{q=1}^Q \alpha_qe^{-2\pi^2\tau^\top\bSigma_q\tau}\cos(2\pi\bomega_q^\top\btau),
\end{align}
with number of components $Q \in \mathbb{N}_+$, the component weights (amplitudes) $\alpha_q \in \mathbb{R}_+$, the (mean) frequencies $\bomega_q\in\R_+^D$, and the frequency covariances $\bSigma_q \succeq \mathbf{0}$.
Here we prove a theorem demonstrating the expressiveness of the above two kernels.
\begin{theorem}
Let $h$ be a complex-valued positive definite, continuous and integrable function. Then the family of \emph{generalized spectral kernels}
\begin{align}
    k_{GS}(\btau) &= \sum_{q=1}^Q \alpha_q h(\btau\circ\bgamma_q)e^{2i\pi\bomega_q^\top\btau},
\end{align}
is dense in the family of stationary, complex-valued kernels with respect to pointwise convergence of functions. Here $\circ$ denotes the Hadamard product, $\alpha_q\in\mathbb{R}_+$, $\bomega_k\in\mathbb{R}^D$, $\bgamma_k\in\mathbb{R}^{D}_+$, $Q\in\mathbb{N}_+$.
\end{theorem}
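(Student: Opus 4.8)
The plan is to pass to the spectral domain and reduce the claim to a density statement about measures. By Bochner's theorem every stationary complex-valued kernel $k$ is the inverse Fourier transform of a finite positive measure $\psi_k$, and pointwise convergence of kernels is \emph{implied} by weak convergence of the associated spectral measures: for fixed $\btau$ the map $\bomega\mapsto e^{2i\pi\bomega^\top\btau}$ is bounded and continuous, so if $\psi_n\Rightarrow\psi$ weakly (testing against $C_b$, so total masses converge) then $\int e^{2i\pi\bomega^\top\btau}\psi_n(\d\bomega)\to\int e^{2i\pi\bomega^\top\btau}\psi(\d\bomega)$ for every $\btau$ (the easy direction of the L\'evy continuity theorem). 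Hence it suffices to show that the spectral measures realizable by $k_{GS}$ are weakly dense in the cone of finite positive measures on $\R^D$.

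First I would compute the spectral measure of a single generalized-spectral component. Since $h$ is positive definite, continuous and integrable, Bochner applied to $h$ yields a nonnegative spectral density $\hat h\ge 0$ with $\int_{\R^D}\hat h(\bomega)\d\bomega = h(\0) > 0$. A change of variables shows that the Hadamard scaling $h(\btau\circ\bgamma_q)$ has spectral density $g_{\bgamma_q}(\bomega) := (\prod_{d=1}^D\gamma_{q,d})^{-1}\,\hat h(\diag(\bgamma_q)^{-1}\bomega)$, and the modulation by $e^{2i\pi\bomega_q^\top\btau}$ translates it by $\bomega_q$. Therefore $k_{GS}$ corresponds exactly to the mixture spectral density $\sum_{q=1}^Q \alpha_q\, g_{\bgamma_q}(\bomega-\bomega_q)$, a finite nonnegative combination of translated, rescaled copies of $\hat h$.

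The core of the argument is then a two-level approximation. (i) As $\bgamma_q\to\0$ componentwise, $g_{\bgamma_q}(\cdot-\bomega_q)$ acts as an approximate identity: for bounded continuous $\phi$ the substitution $\bomega=\bomega_q+\diag(\bgamma_q)\mathbf v$ together with dominated convergence gives $\int \phi(\bomega)\, g_{\bgamma_q}(\bomega-\bomega_q)\d\bomega \to \phi(\bomega_q)\,h(\0)$, i.e. $g_{\bgamma_q}(\cdot-\bomega_q)\Rightarrow h(\0)\,\delta_{\bomega_q}$. (ii) Finite nonnegative atomic measures $\sum_q \beta_q\delta_{\bomega_q}$ are weakly dense in the finite positive measures: partition $\R^D$ into small cells, place an atom at each cell center carrying the $\psi_k$-mass of the cell, and let both the mesh and a truncation radius tend to zero. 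Combining (i) and (ii) by a diagonal argument — first approximate $\psi_k$ by $\sum_q\beta_q\delta_{\bomega_q}$, then realize each atom by setting $\alpha_q=\beta_q/h(\0)>0$ and choosing a sufficiently small bandwidth $\bgamma_q\in\R^D_+$ — produces a sequence of mixture spectral measures converging weakly to $\psi_k$, which the reduction above turns into pointwise convergence $k_{GS}\to k$.

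The main obstacle I anticipate is the bookkeeping in the diagonal argument: steps (i) and (ii) each introduce error, and I must control their interaction while keeping $Q$ finite and all $\alpha_q>0$, $\bgamma_q\in\R^D_+$. Concretely, I would metrize weak convergence of finite positive measures (for instance by a bounded-Lipschitz or L\'evy--Prokhorov metric) so that the triangle inequality applies, bound the atomic-approximation error by the mesh size, and bound the approximate-identity error uniformly over the finitely many atoms by selecting one common small bandwidth. A secondary technical point is verifying the approximate-identity limit for a general anisotropic scaling and a possibly non-isotropic $\hat h$, which uses only $\hat h\in L^1$ and $\hat h\ge 0$; and noting that the construction automatically covers the complex-valued setting, since $\psi_k$ is a genuine, possibly asymmetric, nonnegative measure whose asymmetry is exactly what generates the imaginary part of $k$.
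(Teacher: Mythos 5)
Your proposal is correct, but it executes the argument in the opposite domain from the paper's own proof, and the difference is substantive. The paper works in the lag domain: it first invokes the uniform convergence of random Fourier features (Rahimi--Recht) to conclude that sparse spectrum kernels $\sum_{q}\alpha_q e^{2i\pi\bomega_q^\top\btau}$ approximate any continuous stationary kernel uniformly on compact sets, then notes that $h(\btau\circ\bgamma)/h(\0)\to 1$ uniformly on compacts as $\bgamma\downarrow\0$ by continuity of $h$ alone, and finally diagonalizes over balls $\mathcal{B}(0,n)$ with tolerances $\epsilon_n=1/n$ to turn uniform-on-compacts approximation into pointwise convergence. You instead stay entirely on the spectral side: atomic measures are weakly dense in the finite positive measures, the rescaled translates of $\hat h$ form an approximate identity converging weakly to $h(\0)\delta_{\bomega_q}$, and the easy direction of the L\'evy continuity theorem transfers weak convergence of spectral measures into pointwise convergence of kernels. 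Each route buys something. The paper's route gives the stronger uniform-on-compacts statement and uses only the continuity of $h$ in the smoothing step (integrability is hypothesized but never exploited there); your route is fully deterministic (no appeal to a probabilistic feature-sampling result), makes transparent where each hypothesis enters ($\hat h\ge 0$ from positive definiteness, $\hat h\in L^1$ from integrability and continuity, which powers your dominated-convergence step), handles the complex-valued case automatically via asymmetric spectral measures, and --- notably --- supplies precisely the transfer lemma that the paper's in-text sketch glosses over: the sketch asserts that discrete measures are dense ``in the Banach space of finite measures,'' which is false in total-variation norm; your formulation (weak density tested against $C_b$, followed by the continuity-theorem reduction) is the correct statement of that step. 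The one point to make explicit in a final write-up is the bookkeeping you already flag: metrize weak convergence, drop zero-mass cells so that all $\alpha_q=\beta_q/h(\0)$ are strictly positive, and pick a single common bandwidth $\bgamma\succ\0$ small enough to control the error uniformly over the finitely many atoms.
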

\begin{proofskch}
We know that discrete measures are dense in the Banach space of finite measures. Therefore, the complex extension of sparse spectrum kernel
    $k_{SS}(\btau) = \sum_{k=1}^K \alpha_k e^{2i\pi\bomega_k^\top\btau}$ is dense in stationary kernels.\par
For each $q$, the function $\dfrac{\alpha_q}{h(0)} h(\btau\circ\bgamma_q)e^{2i\pi\bomega_k^\top\btau}$ converges to $\alpha_q e^{2i\pi\bomega_q^\top\btau}$ pointwise as $\bgamma_q\downarrow \mathbf{0}$. Therefore, the proposed kernel form is dense in the set of sparse spectrum kernels, and by extension, stationary kernels.
See Section 1 in supplementary materials for a more detailed proof.
\end{proofskch}\par
We strengthen the claim of \citet{samo2015generalized} by adding a constraint $\alpha_k > 0$ that restricts the family of functions to only valid PSD kernels \citep{samo2017advances}. The spectral distribution of $k_{GS}$ is
\begin{align}
    \psi_{k_{GS}}(\bxi) &= \sum_{q=1}^Q \dfrac{\alpha_q}{\prod_{d=1}^D\gamma_{kd}}\psi_h((\bxi-\bomega_k)\oslash\bgamma_k),
\end{align}
with $\oslash$ denoting elementwise division of vectors. A real-valued kernel can be obtained by averaging a complex kernel with its complex conjugate, which induces a symmetry on the spectral distribution, $\psi_k(\bxi) = \psi_k(-\bxi)$. For instance, the SM kernel has the symmetric Gaussian mixture spectral distribution 
\begin{align}
    \psi_{k_{SM}}(\bxi) &= \dfrac{1}{2}\sum_{q=1}^Q\alpha_q(\N(\bxi|\bomega_q, \bSigma_q)+\N(\bxi|-\bomega_q, \bSigma_q)). 
\end{align}

\subsection{Locally stationary kernels}

As a generalization of stationary kernels, the locally stationary kernels \citep{silverman1957locally} are a simple yet unexplored concept in machine learning. A locally stationary kernel is a stationary kernel multiplied by a sliding power factor: 
\begin{align}
    k_{LS}(\x,\x') &= k_1\left(\dfrac{\x+\x'}{2}\right)k_2(\x-\x').
\end{align}
where $k_1: \mathbb{R}^D\mapsto\mathbb{R}_{\geq 0}$ is an arbitrary nonnegative function, and $k_2:\mathbb{R}^D\mapsto\mathbb{C}$ is a stationary kernel. $k_1$ is a function of the \emph{centroid} between $\x$ and $\x'$, describing the scale of covariance on a global structure, while $k_2$ as a stationary covariance describes the local structure \citep{genton2001classes}. It is straightforward to see that locally stationary kernels reduce into stationary kernels when $k_1$ is constant.

Integrable locally stationary kernels are of particular interest because they are harmonizable with a GSD. Consider a locally stationary Gaussian kernel (LSG) defined as a SE kernel multiplied by a Gaussian density on the centroid $\tx = (\x+\x')/2$. Its GSD can be obtained using the generalized Wiener-Khintchin relations \citep{silverman1957locally}.
\begin{align}
    k_{\text{LSG}}(\x, \x') &= e^{-2\pi^2\tx^\top\bSigma_1\tx}e^{-2\pi^2\btau^\top\bSigma_2\btau},\\
    S_{k_{\text{LSG}}}(\bomega, \bxi) &= \N\left(\left.\dfrac{\bomega+\bxi}{2}\right\vert 0, \bSigma_2\right)\N\left(\left.\bomega-\bxi\right\vert 0, \bSigma_1\right).
\end{align}

\subsection{Interpreting spectral kernels}

While the spectral distribution of a stationary kernel can be easily interpreted as a `spectrum', the analogy does not apply to harmonizable kernels. In this section, we introduce the Wigner transform \citep{flandrin1998time} which adds interpretability to kernels with spectral representations.
\begin{definition}
The \emph{Wigner distribution function} (WDF) of a kernel $k(\cdot,\cdot):\mathbb{R}^D\times\mathbb{R}^D\mapsto\mathbb{C}$ is defined as $W_k:\mathbb{R}^D\times\mathbb{R}^D\mapsto\mathbb{R}$:
\begin{align}
    W_k(\x, \bomega) &= \int_{\mathbb{R}^D} k\left(\x+\dfrac{\btau}{2}, \x-\dfrac{\btau}{2}\right)e^{-2i\pi\bomega^\top\btau} \d{\btau}.
\end{align}
\end{definition}

The Wigner transform first changes the kernel form $k$ into a function of the centroid of the input: $(\x+\x')/2$ and the lag $\x-\x'$, and then takes the Fourier transform of the lag. The Wigner distribution functions are fully equivalent to non-stationary kernels. Given the domain of WDF, we can view WDF as a `spectrogram' demonstrating the relation between input and frequency. Converting an arbitrary kernel into its Wigner distribution sheds light into the frequency structure of the kernel (See Figure \ref{fig:fig1}).\par
The WDFs of locally stationary kernels adhere to the intuitive notion of local stationarity where frequencies remain constant at a local scale. Take locally stationary Gaussian kernel $k_{\text{LSG}}$ as an example:
\begin{align}
    W_{k_{\text{LSG}}}(\x,\bomega) &= \N(\bomega|\mathbf{0}, \bSigma_2) e^{-2\pi^2\x^\top\bSigma_1\x}.
\end{align}

\section{HARMONIZABLE MIXTURE KERNEL}
In this section we propose a novel \emph{harmonizable mixture kernel}, a family of kernels dense in harmonizable covariance functions. We present the kernel in an intentionally concise form, and refer the reader to the Section 2 in the Supplements for a full derivation.

\subsection{Kernel form and spectral representations}

The \emph{harmonizable mixture kernel} (HMK) is defined with an additive structure:
\begin{align}
    k_{\text{HM}}(\x,\x')&=\sum_{p=1}^P k_p(\x-\x_p, \x'-\x_p),\\
    k_p(\x, \x') &= k_{\text{LSG}}(\x\circ\bgamma_p, \x'\circ\bgamma_p)\phi_p(\x)^\top\B_p\phi_p(\x'),
\end{align}
where $P\in\mathbb{N}_+$ is the number of centers, $\left(\phi_p(\x)\right)_{q=1}^{Q_p}=e^{2i\pi\bmu_{pq}^\top\x}$ are sinusoidal feature maps, $\B_p\succeq\mathbf{0}_{Q_p}$ are spectral amplitudes, $\bgamma_p\in\R^D_+$ are input scalings, $\x_p\in\R^D$ are input shifts, and $\bmu_{pq}\in\R^D$ are frequencies. It is easy to verify $k_{\text{HM}}$ as a valid kernel, for each $k_p$ is a product of an LSG kernel and an inner product with finite basis expansion of sinusoidal functions.\par

HMKs have closed form spectral representations such as \emph{generalized spectral density} (See Section 2 in the Supplement for detailed derivation):
\begin{align}
    S_{k_{\text{HM}}}(\bomega, \bxi) &= \sum_{p=1}^P S_{k_p}(\bomega, \bxi)e^{-2i\pi\x_p^\top(\bomega-\bxi)},\\
    S_{k_p}(\bomega, \bxi) &= \dfrac{1}{\prod_{d=1}^D\gamma_{pd}^2}\sum_{1\leq i, j \leq Q_p} b_{pij}S_{pij}(\bomega, \bxi),\\
    S_{pij}(\bomega, \bxi)&=S_{k_{\text{LSG}}}((\bomega-\bmu_{pi})\oslash\bgamma_p, (\bxi-\bmu_{pj})\oslash\bgamma_p).
\end{align}
The \emph{Wigner distribution function} can be obtained in a similar fashion
\begin{align}
    W_{k_{\text{HM}}}(\x,\bomega)&=\sum_{p=1}^P W_{k_p}(\x-\x_p, \bomega),\\
    W_{k_p}(\x,\omega) &= \dfrac{1}{\prod_{d=1}^D\gamma_{pd}}\sum_{1\leq i,j\leq Q_p} W_{pij}(\x,\bomega),\\
    W_{pij}(\x,\bomega) &= W_{k_{\text{LSG}}}\left(\x\circ\bgamma_p, \left(\bomega-(\bmu_{pi}+\bmu_{pj})/2\right)\oslash\bgamma_p\right)\notag\\
    &\times\cos(2\pi(\bmu_{pi}-\bmu_{pj})^\top\x).
\end{align}
The kernel form, GSD and WDF both take a normal density form. It is straightforward to see $S_{k_{\text{HM}}}$ is PSD, and that $k_{\text{Hm}}(-\x, -\x')$ is the GSD of $S_{k_{\text{HM}}}$. A real-valued kernel $k_r$ is obtained by averaging with its complex conjugate: $W_{k_r}(\x,\bomega)=W_{k_r}(\x,-\bomega)$, $S_{k_r}(\bomega, \bxi) = S_{k_r}(-\bomega, -\bxi)$.

\subsection{Expressiveness of HMK}
Similar to the construction of \emph{generalized spectral kernels}, we can construct a generalized version $k_h$ where $k_{\text{LSG}}$ is replaced by $k_{\text{LS}}$, a locally stationary kernel with a GSD.
\begin{theorem} Given a continuous, integrable kernel $k_{\text{LS}}$ with a valid \emph{generalized spectral density}, the harmonizable mixture kernel
\begin{align}
    k_h(\x,\x')&=\sum_{p=1}^P k_p(\x-\x_p, \x'-\x_p),\\
    k_p(\x, \x')&=k_{\text{LS}}(\x\circ\bgamma_p,\x'\circ\bgamma_p)\phi_p(\x)^\top\B_p\phi_p(\x'),
\end{align}
is dense in the family of harmonizable covariances with respect to pointwise convergence of functions. Here $P\in\mathbb{N}_+$, $(\phi_p(\x))_q=e^{2i\pi\bmu_{pq}^\top\x}$, $q=1,\hdots, Q_p$, $\bgamma_p\in\R_+^D$, $\x_p\in\R^D$, $\bmu_{pq}\in\R^D$, $\B_p$ as positive definite Hermitian matrices.
\end{theorem}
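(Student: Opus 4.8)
The plan is to mirror the two-step structure used for the generalized spectral kernel theorem, replacing the sparse spectrum kernel by its harmonizable analogue. Call the finite sinusoidal object $k_{\text{SH}}(\x,\x')=\sum_{1\le i,j\le Q} b_{ij}e^{2i\pi(\bmu_i^\top\x-\bmu_j^\top\x')}$ the \emph{sparse harmonizable kernel}; by Definition~1 it is exactly the harmonizable kernel whose generalized spectral measure is the discrete measure $\sum_{i,j}b_{ij}\delta_{(\bmu_i,\bmu_j)}$, and (under the feature-map convention) it is the basis-expansion factor $\phi_p(\x)^\top\B_p\phi_p(\x')$ appearing in $k_h$ when $\B_p=(b_{ij})$ is Hermitian PSD. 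First I would show that sparse harmonizable kernels are dense in harmonizable covariances in the topology of pointwise convergence, and then that every $k_{\text{SH}}$ is a pointwise limit of kernels of the form $k_h$, so the claim follows by a triangle-inequality argument on the finitely many evaluation points that define a basic neighborhood.

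For the first step, fix a harmonizable target $k$ with generalized spectral measure $\mu$ of bounded variation. Since $\mu$ has finite total variation, truncating the defining integral to a box $[-R,R]^D\times[-R,R]^D$ costs arbitrarily little uniformly in $(\x,\x')$. On the truncated region I would take a \emph{single} cell partition $\{C_i\}$ of $[-R,R]^D$, use it for both the $\bomega$ and the $\bxi$ coordinate, choose representatives $\bmu_i\in C_i$, and replace $e^{2i\pi(\bomega^\top\x-\bxi^\top\x')}$ by its value at $(\bmu_i,\bmu_j)$ on each product cell $C_i\times C_j$. This yields the Riemann sum $\sum_{i,j}b_{ij}e^{2i\pi(\bmu_i^\top\x-\bmu_j^\top\x')}$ with weights $b_{ij}=\mu(C_i\times C_j)$, i.e. a sparse harmonizable kernel. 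The crucial point is that the common partition preserves both structural properties of $\mu$: conjugate symmetry $\mu(C_i\times C_j)=\overline{\mu(C_j\times C_i)}$ makes $\B$ Hermitian, and positive definiteness of $\mu$ makes it PSD, since for any coefficients $(c_i)$
\begin{align}
\sum_{i,j}\overline{c_i}\,c_j\,b_{ij}=\int\!\!\int \overline{g(\bomega)}\,g(\bxi)\,\mu(\d\bomega,\d\bxi)\ge 0,\qquad g=\textstyle\sum_i c_i\mathbf{1}_{C_i}.\notag
\end{align}
Thus $k_{\text{SH}}$ is a genuine covariance, and uniform continuity of the bounded integrand on the compact box makes the Riemann sum converge pointwise to the truncated integral as the mesh shrinks, giving $k_{\text{SH}}\to k$ pointwise.

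For the second step, set $P=1$, $\x_1=\0$, take the frequencies $\bmu_{1q}=\bmu_q$ and amplitude matrix $\B_1=\B/k_{\text{LS}}(\0,\0)$, which is still Hermitian PSD because $k_{\text{LS}}(\0,\0)>0$ for a nontrivial covariance. Then $k_1(\x,\x')=k_{\text{LS}}(\x\circ\bgamma_1,\x'\circ\bgamma_1)\,\phi_1(\x)^\top\B_1\phi_1(\x')$, and letting $\bgamma_1\downarrow\0$, continuity of $k_{\text{LS}}$ gives $k_{\text{LS}}(\x\circ\bgamma_1,\x'\circ\bgamma_1)\to k_{\text{LS}}(\0,\0)$, so $k_1\to k_{\text{SH}}$ pointwise. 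Composing with the first step establishes density. I would also note that each $k_h$ genuinely lives in the harmonizable class: it is a finite sum of products of the harmonizable kernel $k_{\text{LS}}(\cdot\circ\bgamma_p,\cdot\circ\bgamma_p)$ (guaranteed by the assumed GSD) with the sparse harmonizable factor $\phi_p(\x)^\top\B_p\phi_p(\x')$, and products and sums of harmonizable kernels remain harmonizable (convolution of their bounded-variation positive-definite spectral measures).

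The main obstacle is the measure-theoretic content of the first step: approximating an arbitrary positive-definite measure of bounded variation by discrete ones \emph{while preserving the Hermitian PSD constraint}, so that every approximant is a bona fide covariance rather than merely a Hermitian function. The common-partition trick is precisely what resolves this, but care is needed to justify the truncation from the total-variation bound alone and to verify Riemann-sum convergence for the complex (signed/non-positive) measure $\mu$; the conjugate-symmetry and positive-definiteness bookkeeping must be carried out cleanly on the same grid in both coordinates.
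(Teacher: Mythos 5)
Your proposal is correct and follows essentially the same route as the paper's own proof: truncate and discretize the generalized spectral measure over a compact frequency box into a Darboux/Riemann sum, which yields the sinusoidal factor $\phi(\x)^\dagger\B\phi(\x')$ with Hermitian PSD weights inherited from the positive definiteness of the spectral measure, and then recover the full kernel form by multiplying with $k_{\text{LS}}(\x\circ\bgamma,\x'\circ\bgamma)$ and letting $\bgamma\downarrow\0$. Your write-up is in places more careful than the paper's sketch (explicit truncation justified by bounded variation, the indicator-function quadratic-form argument for positive semidefiniteness on a common grid, and the normalization $\B/k_{\text{LS}}(\0,\0)$), but the underlying approach is the same.
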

\begin{proof}
See Section 3 in the supplementary materials.
\end{proof}

\section{VARIATIONAL FOURIER FEATURES}

In this section we propose variational inference for the harmonizable kernels applied in Gaussian process models. 

We assume a dataset of $n$ input $X = \{\x_i\}_{i=1}^n$ and output $\y = \{ y_i\} \in \R^n$ observations from some function $f(\x)$ with a Gaussian observation model:
\begin{align}
y = f(\x) + \varepsilon, \qquad \varepsilon \sim \N(0, \sigma_y^2). \label{eq:noise}
\end{align}

\subsection{Gaussian processes}

Gaussian processes (GP) are a family of Bayesian models that characterise distributions of functions \citep{rasmussen2006}. We assume a zero-mean Gaussian process prior on a latent function $f(\x) \in \R$ over vector inputs $\x \in \R^D$
\begin{align}
f(\x) &\sim \GP( 0, K(\x,\x')),
\end{align}
which defines a priori distribution over function values $f(\x)$ with mean $\mathbb{E}[ f(\x)] = 0$ and covariance 
\begin{align}
\cov[ f(\x), f(\x')] &= K(\x,\x').
\end{align}
A GP prior specifies that for any collection of $n$ inputs $X$, the corresponding function values $\f = ( f(\x_1), \ldots, f(\x_n))^\top \in \R^n$ are coupled by following a multivariate normal distribution 
$\f \sim \N(\0, \K_{ff}),$
where $\K_{ff} = (K(\x_i, \x_j))_{i,j=1}^n \in \R^{n \times n}$ is the kernel matrix over input pairs. The key property of GP's is that output predictions $f(\x)$ and $f(\x')$ correlate according to how similar are their inputs $\x$ and $\x'$ as defined by the kernel $K(\x,\x') \in \R$. 



\subsection{Variational inference with inducing features}


In this section, we introduce variational inference of sparse GPs in an inter-domain setting. Consider a GP prior $f(\x)\sim\mathcal{GP}(0, k)$, and a valid linear transform $\mathcal{L}$ projecting $f$ to another GP $\mathcal{L}_f(\z)\sim\mathcal{GP}(0, k')$. 


We begin by \emph{augmenting} the Gaussian process with $m < n$ inducing variables $u_j = \mathcal{L}_f(\z_j)$ using a Gaussian model. $\z_j$ are \emph{inducing features} placed on the domain of $\mathcal{L}_f(\z)$, with prior $p(\u) = \N( \u | \0, \K_{uu})$ and a conditional model \citep{hensman2015scalable}
\begin{align}
    p(\f | \u) &= \N( \A \u, \K_{ff} - \A \K_{uu} \A^\dag), \label{eq:interp}
\end{align}
where $\A = \K_{fu} \K_{uu}^{-1}$, and $\A^\dag$ denotes the Hermitian transpose of $\A$ allowing for complex GPs. 
The kernel $\K_{uu}$ is between the $m \times m$ inducing variables 
and the kernel $\K_{fu}$ is the cross covariance of $\mathcal{L}$, $\left(\K_{fu}\right)_{is} = \cov(f(\x_i), \mathcal{L}_f(\z_s))$. Next, we define a variational approximation $q(\u) = \N( \u | \mathbf{m}, \mathbf{S})$ with the Gaussian interpolation model \eqref{eq:interp},
\begin{align}
    q(\f) 
     &= \N( \f | \A \mathbf{m}, \K_{ff} - \A (\mathbf{S} - \K_{uu}) \A^\dag),
\end{align}
with free variational mean $\mathbf{m} \in \R^m$ and variational covariance $\mathbf{S} \in \R^{m \times m}$ to be optimised. Finally, variational inference \citep{blei2016} describes an evidence lower bound (ELBO) of augmented Gaussian processes as
\begin{align}
    \hspace{-2.5mm}\log p(\y) & \ge \sum_{i=1}^n \mathbb{E}_{q(f_i)}  \log p(y_i | f_i)  - \mathrm{KL}[ q(\u) || p(\u)]. \label{eq:elbo}
\end{align}

\subsection{Fourier transform of a harmonizable GP}


In this section, we compute cross-covariances between a GP and the Fourier transform of the GP. Consider a GP prior $f\sim\mathcal{GP}(0,k)$ where the kernel $k$ is harmonizable with a GSD $S_k$ and where $\hat{f}$ is the Fourier transform of $f$, 
\begin{align}
    \hat{f}(\bomega) &\triangleq \int_{\mathbb{R}^D} f(\x)e^{-2i\pi\bomega^\top \x}\d{\x}.
\end{align}
The validity of this setting is easily verified because $f$ is square integrable on expectation,
\begin{align}
    \mathbb{E}\left\{\int_{\R^D} |f(\x)|^2\d{\x}\right\} &= \int_{\R^D} k(\x,\x) \d{\x} < \infty.
\end{align}

We can therefore derive the cross-covariances
\begin{align}
    \cov(\hat{f}(\bomega), f(\x)) 
    &= \int_{\R^D} k(\t,\x) e^{-2 i \pi\bomega^\top \t} \d{\t} \\
    \cov(\hat{f}(\bomega), \hat{f}(\bxi)) 
    &= S_k(\bomega, \bxi).
\end{align}
The above derivation is valid for any harmonizable kernel with a GSD. The Fourier transform of $\mathcal{GP}(0,k)$ is a complex-valued GP with kernel $S_k$, which correlates to the original GP.\par
For harmonizable, integrable kernel $k$, we can construct an inter-domain sparse GP model defined in 4.2 by plugging in $\mathcal{L}_f = \hat{f}$.

\begin{figure*}[t]
\begin{center}
    \includegraphics[width=\textwidth, clip=true]{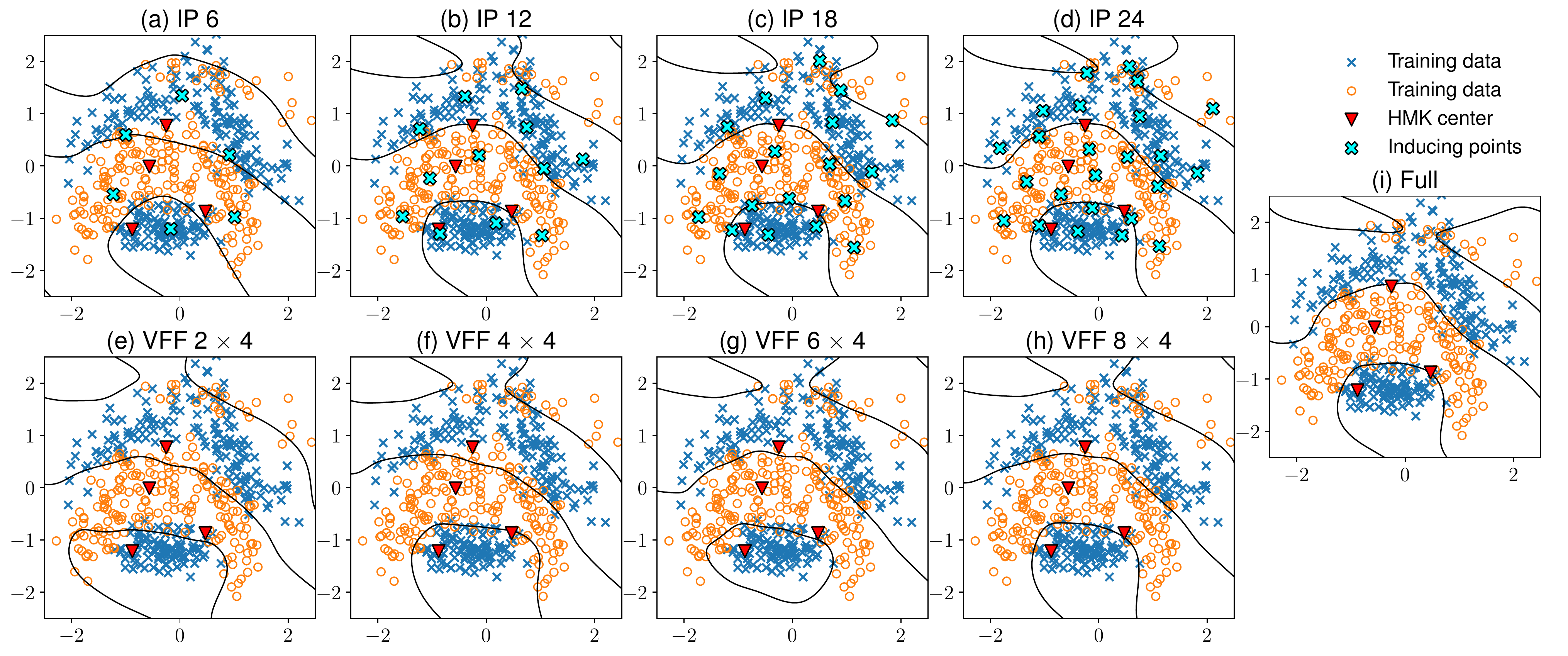}
\end{center}
    \caption{Sparse GP classification with the banana dataset. The model is learned by an HMK with $P=4$ components, and thus 2 inducing frequencies for each component constitute a total of $2 \times 4$ inducing frequencies.}
  \label{figure:gpc}
\end{figure*}

\subsection{Variational Fourier features of the harmonizable mixture kernel}
HMK belongs to the kernel family discussed in 4.3, but we can utilize the additive structure of an HMK $k_{HM} = \sum_{p=1}^P k_p(\x-\x_p, \x'-\x_p)$. A GP with kernel $k_{HM}$ can be decomposed into $P$ independent GPs:
\begin{align}
    f(\x) &= \sum_{p=1}^P f_p(\x-\x_p),\\
    f_p(\x) &\sim \mathcal{GP}(0, k_p(\x, \x')).
\end{align}
Given this formulation, we can derive \emph{variational Fourier features} with inducing frequencies conditioned on one $f_p$. For the $p^{th}$ component, we have $m_p$ inducing frequencies $(\bomega_{p1}, \ldots, \bomega_{pm_p})$ and $m_p$ inducing values $(u_{p1}, \cdots, u_{pm_p})$. We can compute inter-domain covariances in a similar fashion:
\begin{align}
    \K_{fu}(\bomega_{qj}, \x) &\triangleq \cov(f(x), u_{qj}) \label{eq:kfu} \\
    &= \sum_{p=1}^P\cov(f_p(\x-\x_p), u_{qj}) \notag \\
    &= \cov(f_q(\x-\x_q), \hat{f}_q(\bomega_{qj})). \notag
\end{align}
Similarly, we compute entries of the matrix $K_{uu}$
\begin{align}
    \K_{uu}(\bomega_{pi}, \bomega_{qj}) \triangleq \cov(u_{pi}, u_{qj}) &= \begin{cases}
    S_p(\bomega_{pi}, \bomega_{qj}), p=q,\\
    0, p\neq q.
    \end{cases} \label{eq:kuu}
\end{align}
The matrix $\K_{uu}$ allows for a block diagonal structure, which allows for faster matrix inversion. The variational Fourier features are then completed by plugging in entries in $\K_{fu}$ \eqref{eq:kfu} and $\K_{uu}$ \eqref{eq:kuu} into the evidence lower bound \eqref{eq:elbo}.\par
\subsection{Connection to previous work}
In this section we demonstrate that an inter-domain stationary GP with windowed Fourier transform \citep{lazaro2009inter} is equivalent to a rescaled VFF with a tweaked kernel. GPs with stationary kernels do not have valid Fourier transform, therefore, previous attempts of using Fourier transforms of GPs have been accompanied by a window function:
\begin{align}
    \mathcal{L}_f(\bomega) &= \int_{\mathbb{R}^D} f(\x) w(\x) e^{-2i\pi\bomega^\top \x} \d{\x}.
\end{align}
The windowing function $w(\x)$ can be a soft Gaussian window $w(\x) = \mathcal{N}(\x|\bmu,\bSigma)$ \citep{lazaro2009inter} or a hard interval window $w(x)=\mathbb{I}_{[a\leq x\leq b]}e^{2i\pi a}$ \citep{hensman2017variational}. The windowing approach shares the caveat of a blurred version of the frequency space, caused by an inaccurate Fourier transform\citep{lazaro2009inter}.\par
Consider $f\sim\mathcal{GP}(0, k)$ where $k$ is a stationary kernel, and $w(\x) = \mathcal{N}(x|\mu,\bSigma)$, we see that $g(\x) = w(\x)f(\x) \sim\mathcal{GP}(0, w(\x)w(\x')k(\x-\x'))$. It is easy to verify that the kernel of $g(\x)$ is locally stationary. There exist the following relations of cross-covariances:
\begin{align}
    \cov(f(\x), \mathcal{L}_f(\bomega)) &= \dfrac{\cov(g(\x), \hat{g}(\bomega))}{w(\x)},\\
    \cov(\mathcal{L}_f(\bomega), \mathcal{L}_f(\bxi)) &= \cov(\hat{g}(\bomega), \hat{g}(\bxi)).
\end{align}
Therefore, windowed inter-domain GPs are equivalent to rescaled GPs with a tweaked kernel.
\section{EXPERIMENTS}
In this section, we experiment with the harmonizable mixture kernels for kernel recovery, GP classification and regression. We use a simplied version of the harmonizable kernel where the two matrices of the locally stationary $k_{\text{LSG}}$ are diagonals: $\bSigma_1=\mbox{diag}(\sigma_d^2)$, $\bSigma_2=\lambda^2 I$. See Section 6 in the supplement for more detailed information.
\subsection{Kernel recovery}
We demonstrate the expressiveness of HMK by using it to recover certain non-stationary kernels. We choose the non-stationary \emph{generalized spectral mixture kernel} (GSM) \citep{remes2017non} and the covariance function of a time-inverted fractional Brownian motion (IFBM):

\resizebox{1.00\columnwidth}{!}{
\begin{tabular}{l}
    $k_{\text{GSM}}(x,x') = w(x)w(x') k_{\text{Gibbs}}(x, x')\cos(2\pi(\mu(x)x-\mu(x')x')), $ \\
    $k_{\text{Gibbs}}(x, x') = \sqrt{\dfrac{2l(x)l(x')}{l(x)^2+l(x')^2}}\exp\left(-\dfrac{(x-x')^2}{l(x)^2+l(x')^2}\right), $ \\
    $k_{\text{IFBM}}(t,s) = \dfrac{1}{2}\left(\dfrac{1}{t^{2h}}+\dfrac{1}{s^{2h}} - \left\vert\dfrac{1}{t}-\dfrac{1}{s}\right\vert^{2h}\right),$ \\
\end{tabular}
}

where $s, t \in (0.1, 1.1]$ and $x, x' \in [-1, 1]$. 
The hyperparameters of $k_{\text{HM}}$ are randomly initialized, and optimized with stochastic gradient descent.

\begin{figure}[t]
    \centering
    \includegraphics[width=\columnwidth, clip=true, ]{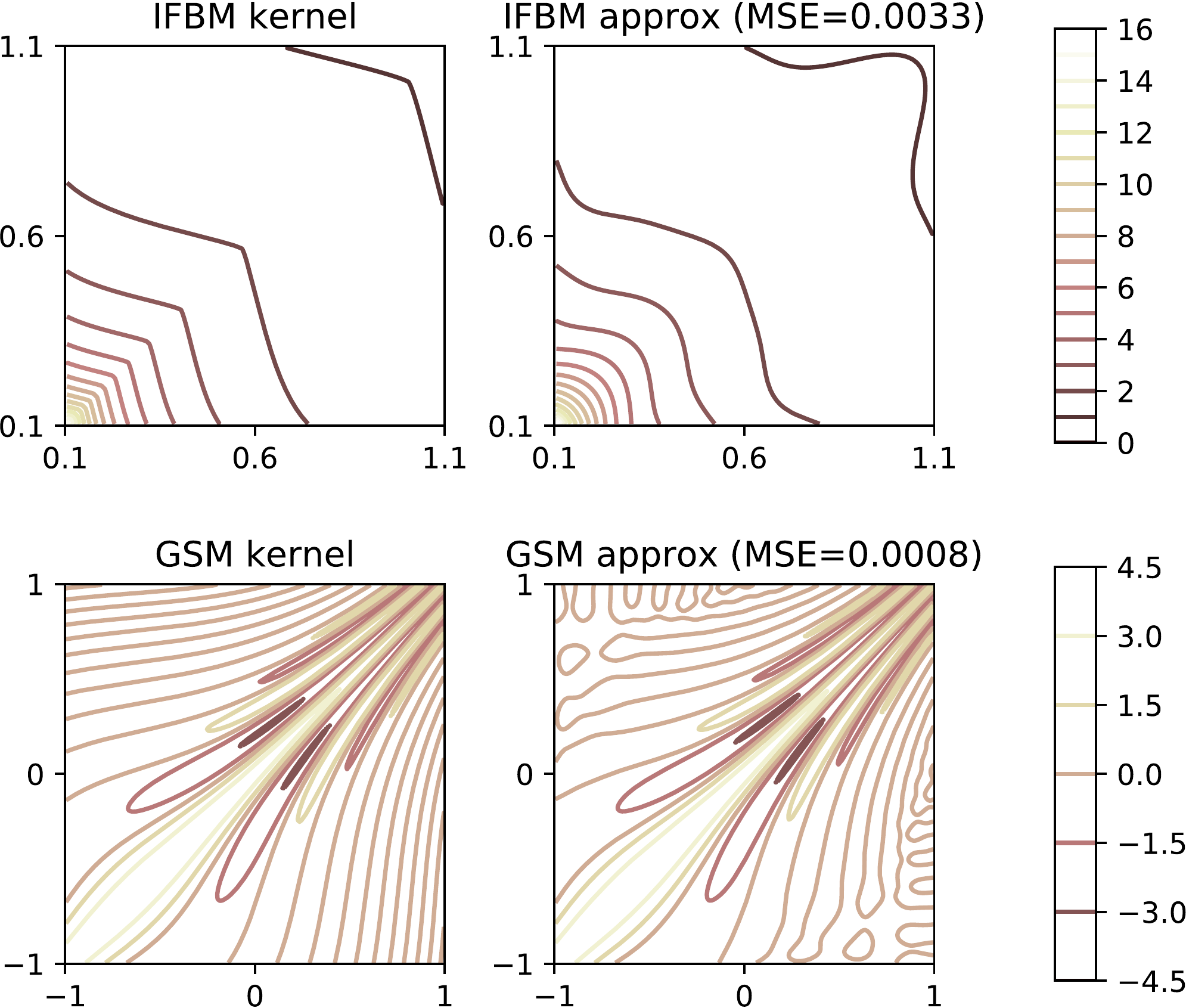}
    \caption{Kernel recovery experiment with true kernels (left) against SM kernel approximations (right).}
    \label{fig:ifbm_rec}
\end{figure}

Both kernels can be recovered almost perfectly with mean squared errors of $0.0033$ and $0.0008$. The result indicates that we can use the GSD and the Wigner distribution of the approximating HM kernel to interpret the GSM kernel (see Section 5 in supplementary materials). 

\begin{figure}[t]
  \centering
    \includegraphics[width=\columnwidth, clip=true]{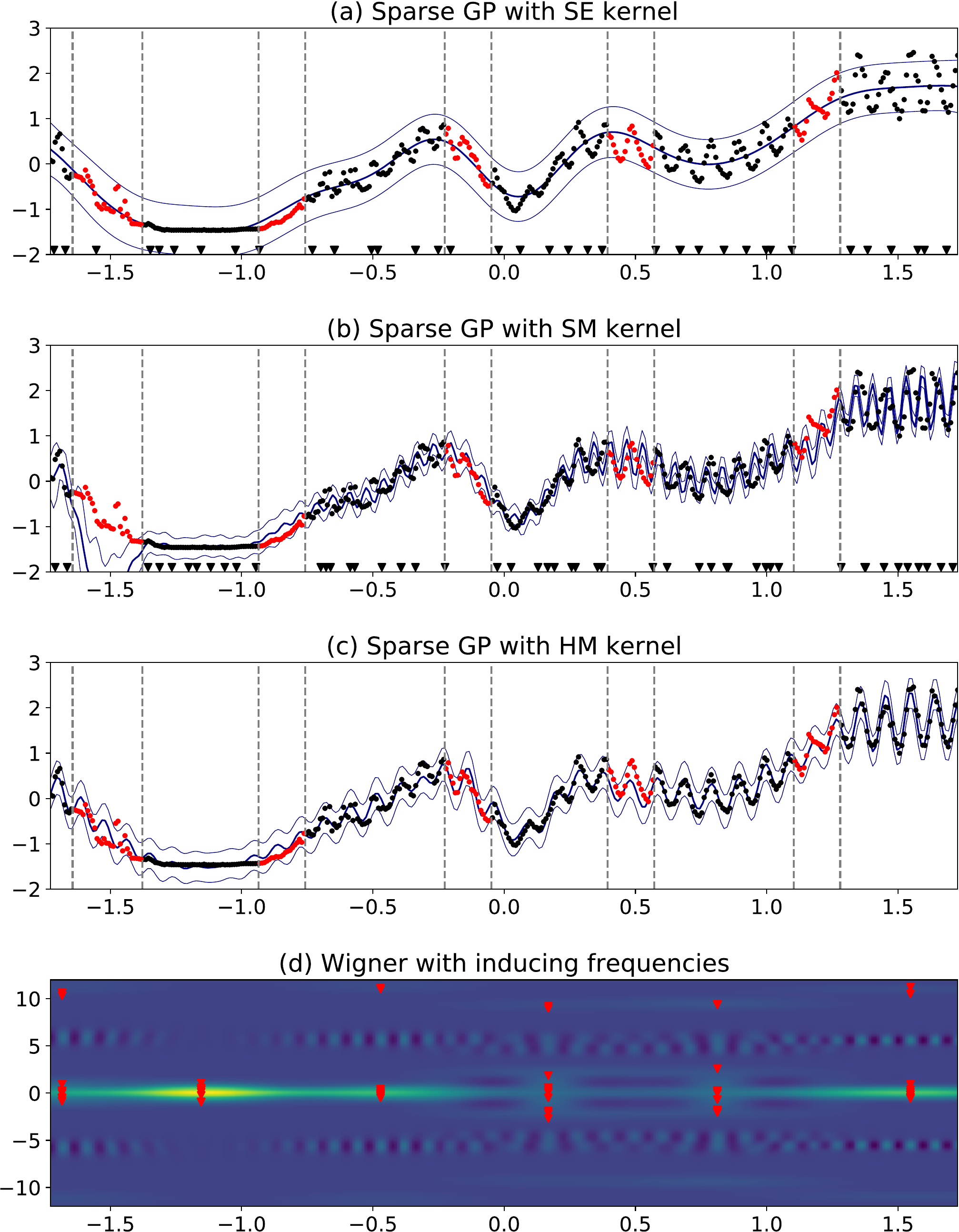}
    \caption{Sparse GP regression with solar irradiance dataset.}
  \label{figure:gpr}
\end{figure}

\subsection{GP classification with banana dataset}

In this section, we show the effectiveness of variational Fourier fetures in GP classification with HMK. We use an HMK with $P=4$ components to classify the banana dataset, and compare SVGP with inducing points (IP) \citep{hensman2015scalable} and SVGP with variational Fourier features (VFF). The model parameters are learned by alternating optimization rounds of natural gradients for the variational parameters, and Adam optimizer for the other parameters \citep{salimbeni2018natural}.

Figure \ref{figure:gpc} shows the decision boundaries of the two methods over the number of inducing points. 
For both variants, we experiment with model complexities from 6 to 24 inducing points in IP, and from 2 to 8 inducing frequencies for each component of HMK in the VFF. The centers of HMK (red triangles) spread to support the data distribution. The IP method is slightly more complex compared to VFF at the same parameter counts in terms of nonzero entries in the variational parameters.

The VFF method recovers roughly the correct decision boundary even with a small number of inducing frequencies, while converging faster to the decision boundaries as the number of inducing frequencies increases.

\subsection{GP regression with solar irradiance}

In this section, we demonstrate the effectiveness of HMK in interpolation for the non-stationary solar irradiance dataset. We run sparse GP regression with squared exponential, spectral mixture and harmonizable mixture kernels, and show the predicted mean, and 95\% confidence intervals for each model (See Figure \ref{figure:gpc}).

We use sparse GP regression proposed in \citep{titsias2009variational} with 50 inducing points marked at the x axis. The SE kernel can not estimate the periodic pattern and overestimates the signal smoothness. The SM kernel fits the training data well, but misidentifies frequencies on the first and fourth interval of the test set.


For sparse GP with HMK, we use the same framework where the variational lower bound is adjusted for VFF. The model extrapolates better for the added flexibility of nonstationarity, and the inducing frequencies aggregate near the learned frequencies. Both first and last test intervals are well fitted. The Wigner distribution with inducing frequencies of the optimised HM kernel is shown in Figure \ref{figure:gpc}d.



%


\section{CONCLUSION}
In this paper, we extend the generalization of Gaussian processes by proposing harmonizable mixture kernel, a non-stationary kernel spanning the wide class of harmonizable covariances. Such kernels can be used as an expressive tool for GP models. We also proposed variational Fourier features, an inter-domain inference framework used as drop-in replacements for sparse GPs. This work bridges previous research on spectral representation of kernels and sparse Gaussian processes.\par
Despite its expressiveness, one may brand the parametric form of HMK as not fully Bayesian, since it contradicts the nonparametric nature of GPs. A fully Bayesian approach would be to place a nonparametric prior over harmonizable mixture kernels, representing the uncertainty of the kernel form \citep{shah2014student}. 

\bibliographystyle{plainnat}
\bibliography{refs}

\newpage
\onecolumn
\section*{Supplementary materials}

\section*{1 \: \: Proof of theorem 2}
In this section, we prove the expressiveness of stationary spectral kernels.
\begin{theorem}
Let $h$ be a complex-valued positive definite, continuous and integrable function. Than the family of \emph{generalized spectral kernels}
\begin{align}
    k_{GS}(\btau) &= \sum_{q=1}^Q \alpha_q h(\btau\circ\bgamma_q)e^{2i\pi\bomega_q^\top\btau}.
\end{align}
with $\circ$ denoting the Hadamard product, $\alpha_q\in\mathbb{R}_+$, $\bomega_k\in\mathbb{R}^D$, $\bgamma_k\in\mathbb{R}^{D}_+$, $Q\in\mathbb{N}_+$ is dense in the family of stationary, complex-valued kernels with respect to pointwise convergence of functions.
\end{theorem}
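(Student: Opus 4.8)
The plan is to prove density in two stages mediated by Bochner's theorem (stated above), exploiting that the topology of pointwise convergence is generated by neighborhoods constraining only finitely many values. Concretely, it suffices to show: for every stationary complex-valued kernel $k$, every finite set of lags $\btau_1,\dots,\btau_N\in\R^D$, and every $\varepsilon>0$, there is a kernel $k_{GS}$ of the stated form with $|k_{GS}(\btau_i)-k(\btau_i)|<\varepsilon$ for all $i$. I would obtain such a $k_{GS}$ by composing two approximations and invoking the triangle inequality, so that no metric or diagonal argument over all of $\R^D$ is needed.

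First stage: approximating the spectral measure by atoms. By Bochner, write $k(\btau)=\int_{\R^D} e^{2i\pi\bomega^\top\btau}\,\psi_k(\d{\bomega})$ with $\psi_k$ a finite positive measure. I would approximate $\psi_k$ by a finitely supported measure $\psi_n=\sum_{q=1}^{Q}\alpha_q\delta_{\bomega_q}$ whose atoms carry the masses $\alpha_q=\psi_k(C_q)\ge 0$ of a fine partition $\{C_q\}$ of a large ball, discarding the complementary tail. Since $\psi_k$ is finite it is tight, so the discarded tail mass can be made arbitrarily small; and on the ball the integrand $\bomega\mapsto e^{2i\pi\bomega^\top\btau_i}$ is bounded and uniformly continuous, so refining the partition makes $\int e^{2i\pi\bomega^\top\btau_i}\,\psi_n(\d{\bomega})$ as close to $\int e^{2i\pi\bomega^\top\btau_i}\,\psi_k(\d{\bomega})=k(\btau_i)$ as desired, simultaneously for the finitely many lags. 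The Fourier transform of $\psi_n$ is exactly the complex sparse-spectrum kernel $k_{SS}(\btau)=\sum_q \alpha_q e^{2i\pi\bomega_q^\top\btau}$ with nonnegative amplitudes, so these kernels are pointwise dense in the stationary family.

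Second stage: recovering sparse-spectrum kernels as window widths vanish. Given $k_{SS}=\sum_q\alpha_q e^{2i\pi\bomega_q^\top\btau}$, I set the amplitudes to $\alpha_q/h(0)$ and consider $k_{GS}^{(\bgamma)}(\btau)=\sum_q \frac{\alpha_q}{h(0)} h(\btau\circ\bgamma_q)e^{2i\pi\bomega_q^\top\btau}$. Here $h(0)>0$: positive definiteness forces $h(0)\ge|h(\btau)|$, and $h(0)=0$ would force $h\equiv 0$, which we exclude. Continuity of $h$ gives $h(\btau\circ\bgamma_q)\to h(0)$ as $\bgamma_q\downarrow\0$ for each fixed $\btau$, hence $k_{GS}^{(\bgamma)}(\btau)\to k_{SS}(\btau)$ pointwise, in particular at $\btau_1,\dots,\btau_N$; choosing $\bgamma_q$ small enough makes this discrepancy below $\varepsilon/2$ while the first stage contributes below $\varepsilon/2$. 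The amplitudes $\alpha_q/h(0)$ remain in $\R_+$ as required, and each summand is a genuine positive definite function (rescaling the argument of the pd function $h$ and modulating by $e^{2i\pi\bomega_q^\top\btau}$ both preserve positive definiteness, and $\alpha_q>0$), so the approximants are valid kernels.

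The main obstacle is the first stage: because the Fourier test functions $e^{2i\pi\bomega^\top\btau}$ are bounded but do not vanish at infinity, the textbook weak-$*$ density of atomic measures (phrased against $C_0$ test functions, and which even fails in total-variation norm) is not directly sufficient. I would need tightness of the finite measure $\psi_k$ to control the tail together with uniform continuity on compacta to control the oscillatory integral, which jointly legitimize interchanging the limit with the integral; the remainder is routine, and the two stages chain cleanly precisely because pointwise density only requires matching finitely many values at a time.
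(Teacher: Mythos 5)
Your proof is correct, and it shares the paper's two-stage skeleton---first show that complex sparse-spectrum kernels approximate any stationary kernel, then recover each sparse-spectrum term by letting the window widths $\bgamma_q \downarrow \0$---but your first stage is argued by a genuinely different route. The paper invokes the uniform convergence of random Fourier features \citep{rahimi2008random}: frequencies sampled from the normalized spectral measure yield a feature map approximating $k_0$ uniformly on compacta, and a diagonal construction ($\epsilon_n = 1/n$, $\mathcal{M}_n = \mathcal{B}(0,n)$) then produces a sequence converging pointwise (in fact locally uniformly) to $k_0$. You instead discretize the Bochner measure deterministically: tightness of the finite measure $\psi_k$ controls the tail, and uniform continuity of $\bomega \mapsto e^{2i\pi\bomega^\top\btau_i}$ on a large ball controls the Riemann-sum error. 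This buys self-containedness (no sampling/concentration theorem, and it treats complex-valued kernels directly, for which RFF is usually not stated), and it correctly repairs the main text's claim that ``discrete measures are dense in the Banach space of finite measures''---which fails in total variation, while weak-$*$ density against $C_0$ test functions is insufficient exactly because $e^{2i\pi\bomega^\top\btau}$ does not vanish at infinity; your tightness argument is the right fix. You also patch two details the paper glosses over: that $h(0)>0$ (forced by positive definiteness unless $h \equiv 0$) and that the approximants are genuine PSD kernels. The one thing the paper's route delivers that yours, as written, does not: its diagonal argument exhibits an actual sequence converging pointwise to $k_0$, whereas you establish density only in the product topology (matching finitely many values at a time), which is a weaker statement on an uncountable product. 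This is not a substantive gap---your stage-one and stage-two estimates are in fact uniform over compact sets of lags, so the same diagonal construction upgrades your argument verbatim---but if ``dense with respect to pointwise convergence'' is read as sequential density, that upgrading sentence should be added.
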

\begin{proof}
We know from the uniform convergence of random Fourier features \citep{rahimi2008random}, that for an arbitrary stationary kernel $k_0(\x,\x') = k_0(\x-\x')$, for all compact subset $\mathcal{M}\in\R^D$, and for all $\epsilon > 0$, there exists a feature map $\zeta_{\omega} (\x) = \left(\alpha_qe^{2\pi\bomega_q^\top\x}\right)_{q=1}^Q$, such that $|\zeta_{\omega} (\x)\zeta_{\omega} (\x')^* - k_0(\x-\x')|<\epsilon$. The uniform convergence of random Fourier features suggests the expressiveness of a generalized form of sparse spectrum kernel $k_{\text{SS}}(\x-\x') = \sum_{q=1}^Q \alpha_qe^{2\pi\bomega_q^\top(\x-\x')}$.\par
For an arbitrary continuous, integrable kernel $h$, consider the function $\tilde{k}(\btau)=\dfrac{h(\btau\circ\bgamma)}{h(\bm{0})}k_{\text{SS}}(\btau), \bgamma \succeq \bm{0}$. Because of the continuity of function $h$, $\tilde{k}$ uniformly approximates $k_{\text{SS}}$ as $\bgamma \downarrow \bm{0}$, and thus can be used to approximate any stationary covariance $k_0$.\par
$\tilde{k}(\btau)$ uniformly approximates any stationary kernel $k_0$ on arbitrary compact subset $\mathcal{M}$ of $\R^D$. We can therefore construct a sequence of $\tilde{k}_n$ by setting $\epsilon_n = \dfrac{1}{n}$, $\mathcal{M}_n = \mathcal{B}(0, n) = \{v| \norm{v} \leq n \}$, $n=1, 2, 3, \cdots$. $\{\tilde{k}_n\}_{n=1}^{\infty}$ converges pointwise to $k_0$. $k_{\text{GS}}$ takes a more general form, and thus has the same level of expressiveness as $\tilde{k}$.
\end{proof}
We can see from the reasoning that sparse spectrum kernel and spectral mixture kernel both weakly span stationary covariances, and thus sharing the same level of expressiveness. But the sparse spectrum kernel only encodes a finite dimensional feature mapping, which reduces a GP regression with a sparse spectrum kernel to a Bayesian linear regression with trigonometric basis expansions. The spectral mixture kernel alleviates overfitting by using Gaussian mixture on the spectral distribution, which implicitly assumes certain level of smoothness of the unknown spectral distribution being modeled -- the Gaussian mixture also leads to an infinite-dimensional feature mapping which does not render a GP regression degenerate.\par
\section*{2 \: \: Derivation of harmonizable mixture kernel}
In this section we derive the parametric form of hramonizable mixture kernel.
The GSD of a locally stationary Gaussian kernel follows a generalized Wiener-Khintchin relation, as noticed in \citep{silverman1957locally}. This relation is easily noticed when subtituting $\x$ and $\x'$ with new variables $\tx=(\x+\x')/2$ and $\btau=\x-\x'$.
\begin{align}
    k_{\text{LSG}}(\x, \x') &= e^{-2\pi^2\tx^\top\bSigma_1\tx}e^{-2\pi^2\btau^\top\bSigma_2\btau},\\
    S_{k_{\text{LSG}}}(\bomega, \bxi) &= \iint k_{\text{LSG}}(\x, \x') e^{-2i\pi(\bomega^\top\x-\bxi^\top\x')} \d{\x}\d{\x'}\\
    &= \iint e^{-2\pi^2\tx^\top\bSigma_1\tx -2i\pi(\bomega-\bxi)^\top\tx}e^{-2\pi^2\btau^\top\bSigma_2\btau-i\pi(\bomega+\bxi)^\top\btau} \d{\tx}\d{\btau}\\
    &= \int e^{-2\pi^2\tx^\top\bSigma_1\tx -2i\pi(\bomega-\bxi)^\top\tx} \d{\tx}\int e^{-2\pi^2\btau^\top\bSigma_2\btau-i\pi(\bomega+\bxi)^\top\btau} \d{\btau}\\
    &= \N\left(\left.\bomega-\bxi\right\vert0, \bSigma_1\right)\N\left(\left.\dfrac{\bomega+\bxi}{2}\right\vert0, \bSigma_2\right).
\end{align}
The Wigner transform of $k_{\text{LSG}}$ is straightforward as the kernel factors into two parts.
\begin{align}
    W_{k_{\text{LSG}}}(\x, \bomega) &= \int k\left(\x+\dfrac{\btau}{2}, \x-\dfrac{\btau}{2}\right)e^{-2i\pi\btau^\top\bomega}\\
    &= e^{-2\pi^2\x^\top\bSigma_1\x} \int e^{-2\pi^2\btau^\top\bSigma_2\btau-2i\pi\btau^\top\bomega} \d{\btau}\\
    &= e^{-2\pi^2\x^\top\bSigma_1\x}\N(\left.\bomega\right\vert0, \bSigma_2).
\end{align}
Now consider the \emph{harmonizable mixture kernel},
\begin{align}
    k_{\text{HM}}(\x,\x')&=\sum_{p=1}^P k_p(\x-\x_p, \x'-\x_p),\\
    k_p(\x, \x') &= k_{\text{LSG}}(\x\circ\bgamma_p, \x'\circ\bgamma_p)\phi_p(\x)^\top\B_p\phi_p(\x')\\
    &= k_{\text{LSG}}(\x\circ\bgamma_p, \x'\circ\bgamma_p)\sum_{1\leq i,j \leq Q_p} e^{2i\pi(\bmu_{pi}^\top\x-\bmu_{pj}^\top\x')}.
\end{align}
We know from the Fourier transform $\hat{f}(\bxi)=\int f(\x) e^{-2i\pi\x^\top\bxi}\d{\x}$, that the translation in the input leads to closed form Fourier transforms: for $g(\x) = f(\x\circ\bgamma)$, $\hat{g}(\bxi)=\dfrac{1}{\prod\gamma_d}\hat{f}(\bxi\oslash\bgamma)$, and for $h(\x) = f(\x-\x_0)$, $\hat{h}(\bxi) =\hat{f}(\bxi) e^{-2i\pi\bxi^\top\x_0}$. The generalized Fourier transform to obtain GSD is equivalent to a Fourier transform of the concatenated vector $\left(\begin{matrix}\x \\ -\x'\end{matrix}\right)$. Using the above observations, we can obtain the GSD of the harmonizable mixture kernel.
\begin{align}
    S_{k_{\text{HM}}}(\bomega, \bxi) &= \sum_{p=1}^P S_{k_p}(\bomega, \bxi)e^{-2i\pi\x_p^\top(\bomega-\bxi)},\\
    S_{k_p}(\bomega, \bxi) &= \dfrac{1}{\prod_{d=1}^D\gamma_{pd}^2}\sum_{1\leq i, j \leq Q_p} b_{pij}S_{pij}(\bomega, \bxi),\\
    S_{pij}(\bomega, \bxi)&=S_{k_{\text{LSG}}}((\bomega-\bmu_{pi})\oslash\bgamma_p, (\bxi-\bmu_{pj})\oslash\bgamma_p).
\end{align}

The Wigner transform of a $k_{\text{HM}}$ requires an additional step of reverting the subscript.
\begin{align}
    k_p(\x, \x') &= k_{\text{LSG}}(\x\circ\bgamma_p, \x'\circ\bgamma_p)\sum_{1\leq i,j \leq Q_p}\beta_{pij} e^{2i\pi(\bmu_{pi}^\top\x-\bmu_{pj}^\top\x')}\\
    &= \dfrac{1}{2}k_{\text{LSG}}(\x\circ\bgamma_p, \x'\circ\bgamma_p)\sum_{1\leq i,j \leq Q_p}\beta_{pij}\left( e^{2i\pi(\bmu_{pi}^\top\x-\bmu_{pj}^\top\x')}+e^{2i\pi(\bmu_{pj}^\top\x-\bmu_{pi}^\top\x')}\right)\\
    &= k_{\text{LSG}}(\x\circ\bgamma_p, \x'\circ\bgamma_p) \sum_{1\leq i,j \leq Q_p}\beta_{pij}\left( \cos\left(2\pi\left(\dfrac{\bmu_{pi}+\bmu_{pj}}{2}\right)^\top\btau\right)\cos(2\pi(\bmu_{pi}-\bmu_{pj})^\top\tx) + ig(\tx,\btau)\right).
\end{align}
The imaginary part $g(\tx, \btau)$ is an odd function with respect to $\btau$: $g(\tx, \btau) = -g(\tx,-\btau)$, and thus has an integral of $0$ with Wigner transform. The above derivation gives a separable kernel formulation with respect to $\tx$ and $\btau$
\begin{align}
    W_{k_{\text{HM}}}(\x,\bomega)&=\sum_{p=1}^P W_{k_p}(\x-\x_p, \bomega),\\
    W_{k_p}(\x,\bomega) &= \dfrac{1}{\prod_{d=1}^D\gamma_{pd}}\sum_{1\leq i,j\leq Q_p} W_{pij}(\x,\bomega),\\
    W_{pij}(\x,\bomega) &= W_{k_{\text{LSG}}}\left(\x\circ\bgamma_p, \left(\bomega-(\bmu_{pi}+\bmu_{pj})/2\right)\oslash\bgamma_p\right)\cos(2\pi(\bmu_{pi}-\bmu_{pj})^\top\x).
\end{align}

\subsection*{2.1 \: \: Derivation of variational Fourier features}
For a GP with an integrable harmonizable kernel $k$, we can derive the cross-covariances between the primary GP $f$ and its Fourier transform $\hat{f}$:
\begin{align}
    \cov(\hat{f}(\bomega), f(\x)) &= \mathbb{E}\left\{\int f(\mathbf{t})f(\x) e^{-2i\pi\bomega^\top \mathbf{t}}\d{\mathbf{t}}\right\} \notag\\
    &= \int_{\R^D} k(\t,\x) e^{-2 i \pi\bomega^\top \t} \d{\t} \\
    \cov(f(\x), \hat{f}(\bomega)) &= \mbox{cov}(\hat{f}(\bomega), f(\x))^* \notag \\
    \cov(\hat{f}(\bomega), \hat{f}(\bxi)) &= \mathbb{E}\left\{\iint f(\x)f(\x')e^{-2i\pi(\bomega^\top \x - \bxi^\top \x')}\d{\x}\d{\x'}\right\} \notag \\
    &= \iint k(\x,\x') e^{-2i\pi(\bomega^\top \x - \bxi^\top \x')}\d{\x}\d{\x'}\notag \\
    &= S_k(\bomega, \bxi).
\end{align}
In the case of harmonizable mixture kernels, we need to compute closed form $\int k_p(\t, \x) e^{-2i\pi\bxi^\top\t} \d{\t}$ for the cross-covariances in variational Fourier features which is derived below:
\begin{align}
    \int k_p(\t, \x) e^{-2i\pi\bxi^\top\t} \d{\t} &= \sum_{1\leq i,j \leq Q_p} \beta_{pij}\exp\left(-2\pi^2\x^\top\left(\dfrac{\bSigma_1}{4}+\bSigma_2\right)-2i\pi\bmu_{pj}^\top\x\right)\notag \\
    &\times \int \exp\left(-2\pi^2(\t-\x_0)^\top\left(\dfrac{\bSigma_1}{4}+\bSigma_2\right)(\t-\x_0) + 2i\pi\mu_{pi}^\top \x -2i\pi\bxi^\top\x\right)\d{\x}\\
    &= \sum_{1\leq i,j \leq Q_p} \beta_{pij}\exp\left(-2\pi^2\x^\top\left(\dfrac{\bSigma_1}{4}+\bSigma_2\right)-2i\pi\bmu_{pj}^\top\x-2i\pi\x_0^\top\bxi\right)\notag \\
    &\times \N\left((\bxi-\bmu_{pi})\oslash\gamma_p\left\vert0, \dfrac{\bSigma_1}{4}+\bSigma_2\right.\right),\\
    \x_0 &= (\bSigma_1+4\bSigma_2)^{-1}(4\bSigma_2-\bSigma_1)\x.
\end{align}
\section*{3 \: \: Proof of theorem 3}
\begin{theorem} Given a continuous, integrable kernel $k_{\text{LS}}$ with a valid \emph{generalized spectral density}, the harmonizable mixture kernel
\begin{align}
    k_h(\x,\x')&=\sum_{p=1}^P k_p(\x-\x_p, \x'-\x_p),\\
    k_p(\x, \x')&=k_{\text{LS}}(\x\circ\bgamma_p,\x'\circ\bgamma_p)\phi_p(\x)^\dag\B_p\phi_p(\x'),
\end{align}
where $P\in\mathbb{N}_+$, $(\phi_p(\x))_q=e^{2i\pi\bmu_{pq}^\top\x}$, $q=1,\hdots, Q_p$, $\bgamma_p\in\R_+^D$, $\x_p\in\R^D$, $\bmu_{pq}\in\R^D$, $B_p$ as positive definite Hermitian matrices, is dense in the family of harmonizable covariances with respect to pointwise convergence of functions.
\end{theorem}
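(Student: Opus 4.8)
The plan is to follow the two-stage template of Theorem 2: first reduce an arbitrary harmonizable covariance to a finite ``atomic'' kernel obtained by discretising its generalized spectral measure, and then recover that atomic kernel as a $\bgamma\downarrow\0$ limit of the windowed HMK form. Throughout it suffices to use a single center, $P=1$ with $\x_1=\0$; the additive and shift structure only add modelling flexibility and are not needed for density.

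\emph{Step 1 (atomic approximation).} Let $k$ be harmonizable with generalized spectral measure $\mu$ of bounded variation, so that $k(\x,\x')=\iint e^{2i\pi(\bomega^\top\x-\bxi^\top\x')}\,\mu(\d{\bomega},\d{\bxi})$. Fix a box $[-R,R]^D$ in frequency space and a partition $\{C_i\}_{i=1}^Q$ of it into small cells with representatives $\bmu_i\in C_i$; crucially, use the \emph{same} partition on both the $\bomega$- and $\bxi$-axes, inducing the product partition $\{C_i\times C_j\}$. Set $b_{ij}=\mu(C_i\times C_j)$ and define $k_{\text{atom}}(\x,\x')=\sum_{1\le i,j\le Q} b_{ij}\,e^{2i\pi(\bmu_i^\top\x-\bmu_j^\top\x')}$. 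Two facts make this work. First, because $\mu$ has finite total variation, the mass outside the box is controlled and the cell-wise replacement of $e^{2i\pi(\bomega^\top\x-\bxi^\top\x')}$ by its value at $(\bmu_i,\bmu_j)$ is a Riemann-sum error that vanishes as $R\to\infty$ and the mesh shrinks; hence $k_{\text{atom}}\to k$ pointwise (uniformly on compacta). Second, the matrix $B=(b_{ij})$ is Hermitian (from the conjugate symmetry $\mu(A\times B)=\overline{\mu(B\times A)}$ of a covariance) and positive semidefinite: testing the positive-definiteness of $\mu$ against the step function $g=\sum_i v_i\,\mathbb{I}_{C_i}$ gives $\sum_{i,j}\bar v_i v_j\,\mu(C_i\times C_j)=\iint g\,\overline{g}\,\d{\mu}\ge 0$, which is exactly $B\succeq\mathbf{0}$. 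Consequently $k_{\text{atom}}(\x,\x')=\phi(\x)^\dag B\,\phi(\x')$ with $(\phi(\x))_i=e^{2i\pi\bmu_i^\top\x}$ is a valid finite-rank sinusoidal kernel; using one shared frequency vocabulary on both axes is precisely what places it in the HMK template.

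\emph{Step 2 (windowing).} Identify the atom frequencies $\bmu_i$ with the HMK frequencies $\bmu_{1i}$, set $\x_1=\0$ and $\B_1=B/k_{\text{LS}}(\0,\0)$, so that $k_h(\x,\x')=k_{\text{LS}}(\x\circ\bgamma,\x'\circ\bgamma)\,\phi(\x)^\dag\B_1\,\phi(\x')$. Since $k_{\text{LS}}$ is continuous with $k_{\text{LS}}(\0,\0)>0$ (a nonzero valid kernel cannot vanish at the origin), $k_{\text{LS}}(\x\circ\bgamma,\x'\circ\bgamma)\to k_{\text{LS}}(\0,\0)$ as $\bgamma\downarrow\0$, uniformly on compacta, and therefore $k_h\to k_{\text{atom}}$ pointwise. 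Combining with Step 1 by a triangle inequality and a diagonal choice of $(R,\text{mesh},\bgamma)$ produces a sequence of HMKs converging pointwise to $k$; each approximant is itself a harmonizable covariance (its GSD is the closed form of Section 2), so the family $k_h$ is dense in the harmonizable covariances.

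The main obstacle is the positive-semidefiniteness claim in Step 1, since it evaluates the positive-definiteness of the bimeasure $\mu$ on indicator rather than continuous test functions. The clean fix is to approximate each $\mathbb{I}_{C_i}$ by continuous functions and pass to the limit, dominating the convergence with the finite total variation of $\mu$. This is where harmonizability does the real work: it simultaneously supplies the bounded variation needed for the truncation and Riemann-sum estimates, and the positive-definiteness structure that is inherited by $B$ exactly because the two frequency axes are discretised with a common partition.
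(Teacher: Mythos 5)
Your proposal follows essentially the same route as the paper's proof: you discretise the generalized spectral measure over a common frequency grid on both axes (the paper's Darboux sum), observe that positive definiteness of $\Psi$ makes the coefficient matrix $\B$ Hermitian positive semidefinite so the atom is a ``generalized sparse spectrum'' kernel $\phi(\x)^\dag\B\phi(\x')$ converging pointwise to $k$ as the box grows and the mesh shrinks, and then recover this atom from the HMK form by letting $\bgamma\downarrow\0$ in the $k_{\text{LS}}$ window. If anything, you are more careful than the paper on two points it glosses over --- testing positive definiteness of the bimeasure on indicator rather than continuous functions, and the diagonal argument combining the two limits --- so the proposal stands as a correct, slightly more detailed rendering of the same argument.
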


\begin{proof}
Discrete measures are dense in the Banach space of complex-valued measures on $\R^D\times\R^D$. And the same can be extended to the denseness of discrete positive definite bimeasures (a subset of measures on $\R^D\times\R^D$) in positive definite bimeasures. Intuitively, a harmonizable kernel $k_0: \R\times\R\mapsto \mathbb{C}$ with a generalized spectral density $S(\bomega, \bxi) =\dfrac{\partial^2\Psi(\bomega, \bxi)}{\partial\bomega\partial\bxi}$ can be expressed in the following form:
\begin{align}
    k_0(\x, \x') &= \iint S(\bomega,\bxi)e^{2i\pi(\bomega^\top\x-\bxi^\top\x')} \d{\bomega}\d{\bxi}.
\end{align}
Consider the Darboux sum with respect to a grid of frequencies $\bomega_0 < \bomega_2 < \hdots < \bomega_Q$
\begin{align}
    \sum_{1\leq u,v\leq Q} e^{2i\pi(\bomega_{v}^\top\x-\bomega_{u}^\top\x')}\Psi([\bomega_{u-1}, \bomega_u], [\bomega_{v-1}, \bomega_v]) = \sum_{1\leq u, v\leq Q} \alpha_{uv} e^{2i\pi(\bomega_u^\top\x-\bomega_v^\top\x)}.
\end{align}
Given the positive definiteness of $\Psi(\cdot,\cdot)$, the matrix $(\alpha_{uv})_{u,v=1}^Q$ is positive semidefinite. the Darboux sum takes a ``generalized sparse spectrum'' form: $k_{\text{GSS}}(\x, \x') = \phi(\x)^\dag \B \phi(\y)$. It is an uniform approximator of the double integral on a compact set $[\bomega_0, \bomega_Q]\times[\bomega_0, \bomega_Q]$, which converges to $k_0$ as $[\bomega_0, \bomega_Q]\times[\bomega_0, \bomega_Q]$ covers the entire frequency domain.\par
Given the expressiveness of the generalized sparse spectrum kernel, we can similarly smooth the spectral representation by multiplying with $k_{\text{LS}}(\x\circ\gamma, \x'\circ\gamma)$, and add more flexibility by translating the input, which gives the final harmonizable mixture kernel form.
\end{proof}
It is worth noting that the theorem can be strengthened from positive semidefinite Hermitian matrices $\B_p$, to non-negative valued positive semidefinite matrices. This is an immediate result from the ``phase shift'' of the Fourier transform.
\section*{4 \: \: Expressiveness of product spectral kernels}
The spectral mixture product (SMP) kernel \citep{wilson2014fast} is a variant of the spectral mixture kernel, where the inner product inside the cosine function is decomposed into a product of cosines, which makes each spectral component a product kernel.
\begin{align}
    k_{\text{SMP}}(\btau) &= \sum_{q=1^Q} w_q^2\prod_{d=1}^De^{-2\pi^2\sigma_d^2\tau_d^2}\cos(2\pi\mu_{qd}\tau_d).
\end{align}
Spectral mixture product kernel is used in multidimensional pattern discovery for its added scalability \citep{wilson2014fast}. However, it is not as expressive as the original spectral mixture kernel. We see the product of cosines can be decomposed as follows
\begin{align}
    \prod_{d=1}^D\cos(2\pi\mu_{qd}\tau_d) &= \dfrac{1}{2^D} \sum_{\mathbf{b}\in\{-1, 1\}^D} e^{2i\pi(\mathbf{b}\circ\bmu)^\top\btau}.
\end{align}
Therefore, product spectral kernels are spectral mixture kernel with additional symmetry constraint: $\psi_k(\omega) = \psi_k(b\circ\omega), \forall b\in\{-1, 1\}^D$. Note that this constraint is stricter than the constraint for an arbitrary stationary kernel $\psi_k(\omega) = \psi_k(-\omega)$. We conclude that spectral mixture product kernel shall behave as well as spectral mixture kernel when we underlying covariance has a spectral distribution that is symmetrical with respect to every ``axis''.\par
For multidimensional harmonizable spectral kernel, we can utilize enhanced scalability when we similarly replace the cosine term with a product of cosines with respect to every dimension, which leads to similar stronger symmetry of the generalized spectral distribution $\Psi(\omega, \xi) = \Psi(b_1\circ\omega, b_2\circ\xi), \forall b_1, b_2 \in\{-1, 1\}^D$.\par
When we use product spectral kernel in replacement of original spectral kernels, there is a tradeoff between scalability and expressiveness: product spectral kernels offer additional scalability for the cost of reduced expressiveness based on symmetry of the (generalized) spectral distribution.
\section*{5 \: \: Interpreting generalized spectral mixture kernel}
The \emph{generalized spectral mixture kernel} (GSM) \citep{remes2017non} is a nonstationary generalization of the stationary spectral mixture kernel. The functional formulation makes the kernel able to handle complex structure in the input. It is formulated as 
\begin{align}
    k_{\text{GSM}}(x,x') &= \sum_{q=1}^Qw_q(x)w_q(x') k_{\text{Gibbs, q}}(x, x')\cos(2\pi(\mu_q(x)x-\mu_q(x')x')),\\
    k_{\text{Gibbs, q}}(x, x') &= \sqrt{\dfrac{2l_q(x)l_q(x')}{l_q(x)^2+l_q(x')^2}}\exp(-\dfrac{(x-x')^2}{l_q(x)^2+l_q(x')^2}),
\end{align}
where functions $w_q(x)$, $\mu_q(x)$, $l_q(x)$ have GP priors, encoding a spectrogram with $w_q(x)$ denoting the magnitude of the frequency, $\mu_q(x)$, and $l_q(x)$ denoting the mean and variance of the frequency components. We propose that this kernel first projects input using some unknown feature map, and then assume stationary in the projected space and fit a stationary spectral mixture kernel. Consider the kernel $k_{FSS}(\x,\x')=\cos(g(\x)-g(\x'))$ with an arbitrary function $g: \R^D \mapsto\R$. Assuming $g(\cdot)$ lies within some RKHS $\mathcal{H}$, then $g(x) = \langle g, K(x,\cdot)\rangle_\mathcal{H}$ is an inner product between a ``constant vector'' $g$ and the projected input $K(x,\cdot)$, therefore the kernel $k_{FSS}$ generalizes sparse spectrum kernel by projecting the data with a feature map first. The GSM kernel then multiplies $k_{FSS}$ with a Gibbs kernel, implying an unknown mixture model on the spectrum induced by the projected space. \par
\begin{figure}[h]
    \centering
    \includegraphics[width=.6\textwidth]{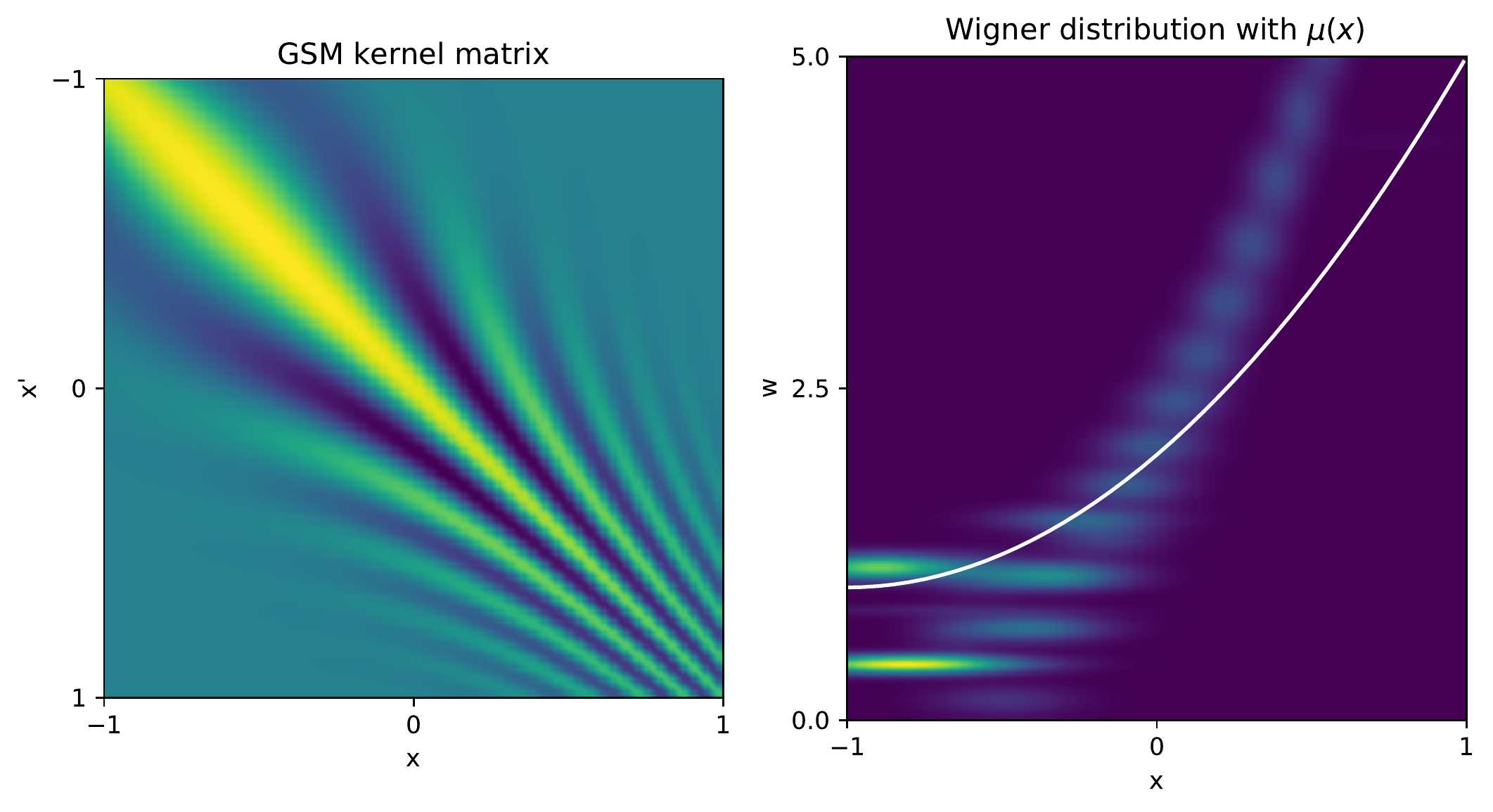}
    \caption{Wigner distribution of the approximation of a GSM kernel}
    \floatfoot{The white line denotes the $\mu(x)$ corresponding to frequency of the spectrogram.}
    \label{fig:my_label}
\end{figure}
However, the intuitive interpretation of the underlying spectrogram might be an inaccurate way to interpret GSM kernel. When we approximate a GSM kernel with HM kernel, the Wigner distribution of the HM kernel does not quite correspond to the spectrogram interpretation: the mean of the frequency components are ``stretched'', when $x$ approaches $1$, the actual local frequency is higher than what the function $\mu(x)$ suggests. GSM kernel seems to keep a biased account of the frequency information.

While the harmonizable mixture kernel handles nonstationarity in the input directly, the GSM kernel is equally valid -- it projects the input space to a feature space, and then assumes stationarity on the feature space.
\section*{6 \: \: Experiment details}
The models are implemented in Python using the GPFlow framework \citep{GPflow2017}. We implemented the \emph{harmonizable mixture kernel}, two sparse GP models with variational Fourier features (namely the variational lower bound for sparse GP regression \citep{titsias2009variational} and the stochastic variational Gaussian process \citep{hensman2017variational}), and a natural gradient optimizer accepting complex-valued variational parameters.\par
\subsection*{6.1 \: \: Kernel recovery}
For kernel recovery, we perform stochastic gradient descent using Adam \citep{kingma2014adam}, using mean square error of random batches of data as objective function.\par
\subsection*{6.2 \: \: GP classification}
For GP classification using banana dataset, we selected a subset of data containing 500 data points, and trained a variational GP model. The full variational model is then approximated using sparse GP with inducing points and inducing frequencies. \par
The inducing points are initialized using K-means clustering, and the inducing frequencies are initialized using the frequency means suggested in the trained HMK, with an added Gaussian noise. We ran each model with 5 random initializations and pick the model with highest classification accuracy on the training set.\par
For the training of sparse GP model, we first trained the variational parameters with natural gradients for 200 iterations. We then jointly train the inducing variables and variational parameters with 700 alternating rounds of optimization using respective natural gradient optimizers and Adam (such approach is suggested in \citep{salimbeni2018natural}). 
\subsection*{6.3 \: \: GP regression}
For GP regression with solar irrandiance, we used the same partition of training and test set in experiments in \citep{gal2015improving} and \citep{hensman2017variational}. We further standardize the X-axis for numerical stability of the variational Fourier features. We used sparse GP regression \citep{titsias2009variational}, where the model is modified to allow for VFF with the harmonizable mixture kernel.\par
For GP regression with Gaussian kernel, we used 50 inducing points initialized with K-Means, and initialized the kernel hyperparameters using 5 increasing lengthscales. The model is chosen using log-likelihoods on the training set.\par
With an assumption of smoothness of the underlying data, we used the residual value of the training data minus the predicted value of the previous model, and used a discrete Fourier transform on 6 subdivisions of data. The SM kernel has 3 frequency components initialized with respectively the highest two frequency in the discrete Fourier transform and the 0 frequency. This is initialization is then added with Gaussian noise and optimized.\par
The HMK for GP regression has a total of $P=6$ components, with $Q=3$ frequency values for each components. The input shifts $\x_p$ are initialized using K-means clustering, and the frequency values are 0, the highest density frequency obtained in discrete Fourier transform, and random values. We ran the sparse GP model with inducing points for some iterations and then ran variational Fourier features centered around the frequency values.\par
\end{document}